\documentclass[11pt]{article}

\usepackage{amsfonts,amssymb,amsthm, amsmath, mathtools}
\usepackage{dsfont}
\usepackage{graphicx}
\usepackage{fullpage}
\usepackage{changepage}
\usepackage{enumitem}
\usepackage{enumerate}
\usepackage{algorithm}
\usepackage{titling}
\usepackage{algorithmic}
\usepackage{scalerel}
\usepackage{accents}
\usepackage{layout}
\usepackage{printlen}
\usepackage{booktabs}
\usepackage{float}
\usepackage{xspace}
\usepackage[T1]{fontenc}
\usepackage{lmodern}
\usepackage{aliascnt}
\usepackage{comment}

\usepackage{xcolor}
\usepackage{graphicx}
\usepackage{enumitem}
\usepackage{subcaption}
\usepackage[export]{adjustbox} 

\usepackage{optidef}

\usepackage{nicefrac,xfrac}

\usepackage[toc,page]{appendix}

\usepackage{makecell}
\usepackage{tablefootnote}
\usepackage{lipsum}
\usepackage{caption}
\usepackage{subcaption}
\usepackage{hyperref}
\usepackage{mfirstuc}
\usepackage{multirow}
\usepackage{array}

\usepackage{natbib}
\usepackage{csquotes}
\usepackage{comment}

\hypersetup{hidelinks}

\usepackage{bbm}

\usepackage[noabbrev]{cleveref}
\crefname{claim}{claim}{claims}
\crefname{enumi}{property}{properties}
\crefname{adversary}{adversary}{adversaries}

\usepackage{tikz}
\usepackage{pgfplots}
\pgfplotsset{width=10cm,compat=1.9}
\definecolor{darkpastelgreen}{rgb}{0.01, 0.75, 0.24}
\definecolor{darkorchid}{rgb}{0.6, 0.2, 0.8}
\definecolor{blue(pigment)}{rgb}{0.2, 0.2, 0.6}

\Crefname{algorithm}{Algorithm}{Algorithms}
\allowdisplaybreaks

\usetikzlibrary{arrows, decorations.markings}
\tikzstyle{vecArrow} = [thick, decoration={markings,mark=at position
   1 with {\arrow[semithick]{open triangle 60}}},
   double distance=1.4pt, shorten >= 5.5pt,
   preaction = {decorate},
   postaction = {draw,line width=1.4pt, white,shorten >= 4.5pt}]
\tikzstyle{innerWhite} = [semithick, white,line width=1.4pt, shorten >= 4.5pt]

\allowdisplaybreaks
\theoremstyle{plain}
\newtheorem{theorem}{Theorem}[section]

\newtheorem*{theorem*}{Meta-Theorem}

\newaliascnt{proposition}{theorem}
\newtheorem{proposition}[proposition]{Proposition}
\aliascntresetthe{proposition}

\newaliascnt{definition}{theorem}
\newtheorem{definition}[definition]{Definition}
\aliascntresetthe{definition}

\newaliascnt{corollary}{theorem}
\newtheorem{corollary}[corollary]{Corollary}
\aliascntresetthe{corollary}

\newaliascnt{lemma}{theorem}
\newtheorem{lemma}[lemma]{Lemma}
\aliascntresetthe{lemma}

\newtheorem{example}[definition]{Example}
\newtheorem{remark}[definition]{Remark}

\theoremstyle{plain}

\usepackage{xfrac}
\usepackage{thm-restate}

\newcommand{\xhdr}[1]{\vspace{2mm} \noindent{\bf #1}}

\newcommand{\val}{v}

\newcommand{\del}{\delta}

\newcommand{\set}[1]{\left\{#1\right\}}

\newcommand{\norm}[1]{\left\lVert#1\right\rVert}
\newcommand{\ip}[2]{\left\langle #1, #2 \right\rangle}

\newcommand{\tO}{\widetilde{O}}
\DeclareMathOperator{\TV}{TV}
\DeclareMathOperator{\width}{width}

\newcommand{\TT}{T}

\newcommand{\cR}{\mathcal{R}}
\newcommand{\cRT}{\cR_{\TT}}

\newcommand{\Indic}[1]{\mathcal{I}\{#1\}}

\newcommand{\indicator}[1]{\mathbb{I}\!\set{#1}}

\newcommand{\round}{t}              \newcommand{\price}{p}             \newcommand{\hparam}{\theta}       \newcommand{\uvec}{u}              \newcommand{\batch}{B}             \newcommand{\bReg}{R}              \newcommand{\phase}{i}
\newcommand{\Tphase}{r}
\newcommand{\pimin}{p_\phase^{\min}}
\newcommand{\pimax}{p_\phase^{\max}}
\newcommand{\punderi}{\underline{p}_\phase}

\newcommand{\Wsize}{a}

\newcommand{\Sset}{S}
\newcommand{\SExtendedset}{S^{+}}

\newcommand{\Hset}{H}
\newcommand{\HExtendedset}{H^{+}}
\newcommand{\fb}{f}

\newcommand{\dimension}{d}
\newcommand{\ALG}{\textsc{Alg}}
\newcommand{\TVUB}{V}
\newcommand{\hparams}{\{\hparam_t\}_{t\in[T]}}
\newcommand{\Reg}{\textsc{Regret}}

\newcommand{\TVpB}{\Delta}

\providecommand{\Rspace}{\mathbb{R}}
\providecommand{\Kset}{K}
\providecommand{\Eset}{\mathcal{E}}
\providecommand{\Hset}{H}
\providecommand{\xvec}{x}
\providecommand{\ellmat}{A}
\providecommand{\ellcenter}{c}
\providecommand{\midprice}{\widetilde{\price}}
\providecommand{\LEP}{\operatorname{LEP}}
\providecommand{\shallowpricing}{\textsc{ShallowPricing}}

\providecommand{\ctxnorm}{q}

\providecommand{\scalelevels}{L}

\providecommand{\batchcount}{m}

\providecommand{\Nspace}{\mathbb{N}}
\providecommand{\batchend}{\tau}
\providecommand{\explorescales}{\mathcal{X}}
\providecommand{\virtualfb}{\widetilde{\fb}}
\providecommand{\shallowupdate}{\operatorname{ShallowUpdate}}

\newcommand{\reals}{\mathbb{R}}

\newcommand{\prob}[2][]{\text{Pr}\ifthenelse{\not\equal{}{#1}}{_{#1}}{}\!\left[{\def\givenn{\middle|}#2}\right]}
\newcommand{\expect}[2][]{\mathbb{E}\ifthenelse{\not\equal{}{#1}}{_{#1}}{}\!\left[{\def\givenn{\middle|}#2}\right]}

\newcommand{\tparen}{\big}
\newcommand{\tprob}[2][]{\text{Pr}\ifthenelse{\not\equal{}{#1}}{_{#1}}{}\tparen[{\def\given{\tparen|}#2}\tparen]}
\newcommand{\texpect}[2][]{\mathbb{E}\ifthenelse{\not\equal{}{#1}}{_{#1}}{}\tparen[{\def\given{\tparen|}#2}\tparen]}

\newcommand{\sprob}[2][]{\text{Pr}\ifthenelse{\not\equal{}{#1}}{_{#1}}{}[#2]}
\newcommand{\sexpect}[2][]{\mathbb{E}\ifthenelse{\not\equal{}{#1}}{_{#1}}{}[#2]}

\title{Optimal Contextual Pricing in Non-Stationary Environments}

\author{
Yiding Feng\thanks{Department of Industrial Engineering
and Decision Analytics, Hong Kong University
of Science and Technology.
Email: \texttt{ydfeng@ust.hk}.}
\and
Alireza Rezaeimoghadam\thanks{School of Computer and Communication Sciences,
EPFL. Email: \texttt{alireza.rezaeimoghadam@epfl.ch}. Part of this work was conducted while
Alireza Rezaeimoghadam was a research intern at
the Hong Kong University of Science and Technology
(HKUST), hosted by Yiding Feng.}}

\date{}

\begin{document}

\maketitle

\begin{abstract}
We study contextual dynamic pricing in a non-stationary
environment. In each round, a seller observes a buyer's
context, posts a price, and receives only binary purchase
feedback. The buyer's value is linear in the context and
an unobserved feature vector that may change over time,
subject to a known total variation budget $\TVUB$.
We measure dynamic regret against the first-best benchmark
that extracts each buyer's full value.

For identical contexts, we develop a deterministic algorithm
that combines binary-search exploration, lower-envelope
pricing, and two complementary restart mechanisms.
It achieves regret
$O((\sqrt{\TT\TVUB}+1)\log\TT)$.
With heterogeneous contexts, a new geometric difficulty
arises: small feature changes can leave a relaxed feasible
set narrow but far from the current feature vector.
To address this difficulty, we introduce a multiscale
shallow-pricing algorithm that maintains relaxed feasible
sets and ellipsoidal approximations at several variation
scales.
Larger scales provide reliable localization certificates,
while the smallest active scale determines the exploitation
price.
The algorithm achieves dynamic regret
$O((\dimension^2+\dimension^{3/2}\sqrt{\TT\TVUB})
\log^2\TT)$.
Both guarantees match the identical-context lower bound
$\Omega(\sqrt{\TT\TVUB}+\log\log\TT)$ up to logarithmic
factors. \end{abstract}

\thispagestyle{empty}
\newpage

\section{Introduction}
\label{sec:introduction}

Online sellers repeatedly make pricing decisions for buyers with
different observable characteristics and changing willingness to pay.
The economic scale of these decisions is substantial: U.S. consumers
spent \$257.8 billion online during November and December 2025,
including \$44.2 billion during the five-day Cyber Week
\citep{Adobe2026HolidayShopping}.
Observable context can help a seller tailor prices to individual
transactions, but the relationship between context and willingness
to pay may change with seasonality, preferences, and market conditions.
Learning this relationship is particularly challenging because
pricing feedback is censored: a purchase or rejection reveals only
whether the buyer's value exceeds the posted price.
An effective pricing algorithm must therefore learn from limited
feedback while adapting to a changing environment.

We study contextual dynamic pricing over $\TT$ rounds.
In each round, a new buyer arrives with an observable
$\dimension$-dimensional context and an unobserved
$\dimension$-dimensional feature vector.
The buyer's value is the inner product of these two vectors.
Contexts are nonnegative unit vectors, and hidden features are
nonnegative vectors with Euclidean norm at most one, so values
lie between zero and one.
After observing the context, the seller posts a price and observes
only whether the buyer purchases.
Contexts may vary arbitrarily across rounds.
The hidden features may also change, subject to a known upper bound
$\TVUB$ on the sum of their Euclidean changes between consecutive
rounds.
Our objective is to minimize \emph{dynamic regret} relative to the
\emph{first-best benchmark}: the total revenue of an oracle that
knows each buyer's value and posts that value as the price.
This benchmark extracts the full surplus in every round and is
stronger than comparison with the best single fixed price in hindsight \citep{KL-03}.

Our problem builds on the study of online posted-price learning
initiated by \citet{KL-03} and connects two lines of dynamic pricing
research.
Stationary contextual pricing studies heterogeneous contexts with
a fixed hidden parameter \citep{LS-18,CLP-20,LLS-21}, whereas
\citet{LemeEtAl21MovingTarget} study non-contextual pricing with
changing values under bounds on their per-round changes.
We consider heterogeneous contexts together with changing hidden
features under a known aggregate variation budget. Non-stationary contextual pricing has also been studied under
stochastic valuation noise and demand models
\citep{Javanmard2017,BabyEtAl2023}; our focus is the noiseless
linear-valuation model and regret against the first-best benchmark. 
(A comprehensive discussion on the further related work can be found in \Cref{apx:related-work}.)
This leads to the following question:
\begin{quote}
\emph{What is the optimal dynamic regret in this setting,
and what additional difficulties arise from the interaction
between contextual heterogeneity and non-stationarity?}
\end{quote}

\subsection{Our Results and Techniques}
We develop a deterministic contextual pricing algorithm with
dynamic regret
$O\!\left(
    (\dimension^2+
    \dimension^{3/2}\sqrt{\TT\TVUB})\log^2\TT
\right)$.
Together with a lower bound of
$\Omega(\sqrt{\TT\TVUB}+\log\log\TT)$,
this establishes optimality up to logarithmic factors in $\TT$
for fixed $\dimension$ among deterministic algorithms.
The guarantee holds for every variation budget
$\TVUB\in[0,\TT]$, including the stationary case.
Our algorithm combines periodic and inconsistency restarts
with a multiscale extension of shallow pricing.

We first develop the restart mechanism in the identical-context
environment (\Cref{sec:identical-context}).
In this setting, binary search followed by lower-envelope pricing
provides a simple exploration-then-exploitation procedure.
Non-stationarity creates two distinct difficulties:
an increase in values can leave an accepted price persistently
too low, whereas even a small decrease can turn a previously
accepted price into a rejection and cause a large revenue loss.
We address these difficulties through periodic restarts and
immediate restarts after rejected exploitation prices.
A price margin controls the frequency of the latter by ensuring
that each such rejection requires sufficient variation.
This yields dynamic regret
$O((\sqrt{\TT\TVUB}+1)\log\TT)$.

Extending this argument to heterogeneous contexts requires
additional geometric reasoning.
A narrow interval directly localizes a scalar value, but
localization in one context direction need not provide useful
information in another.
We therefore build on the shallow-pricing procedure of
\citet{CLP-20}, which uses ellipsoidal approximations to guide
exploration.
To accommodate non-stationarity, we relax the halfspace constraints
induced by purchase feedback.
However, using a single relaxation scale introduces a new
difficulty: a relaxed feasible set can become narrow while
excluding the current hidden parameter.
Nearly parallel context directions can amplify small parameter
changes into a large localization error.
Our Example~\ref{ex:narrow-misplaced-set} shows why
the identical-context exploitation argument does not
extend directly.

Our main algorithm resolves this difficulty by maintaining
relaxed feasible sets and enclosing ellipsoids at multiple
dyadic scales (\Cref{sec:contextual}).
It explores whenever an active scale indicates substantial
uncertainty and otherwise posts the lower-envelope price
of the smallest active feasible set.
The key observation is that a scale at least as large as
the within-batch parameter diameter retains every hidden
parameter in that batch.
Its ellipsoid therefore provides a reliable certificate
of localization.
Nesting of the feasible sets then controls the regret
of accepted exploitation prices, even when the smallest
active set excludes the current parameter.
Rejected exploitation prices trigger immediate restarts.
Combining this multiscale argument with geometric bounds
on exploration and variation-based bounds on the number
of restarts yields our dynamic regret guarantee.

\subsection{Further Related Work}
\label{apx:related-work}

We discuss additional connections to online pricing, contextual search,
and learning in non-stationary environments. The main distinctions
concern the valuation model, the measure of non-stationarity, the
information available to the seller, and the revenue benchmark.

\xhdr{Online posted pricing and demand learning.}
\citet{KL-03} study online posted-price auctions under identical,
random, and worst-case valuations. For an unknown identical value,
they establish the optimal $\Theta(\log\log\TT)$ regret. In this
special case, the best fixed price and the first-best benchmark
coincide. For arbitrary valuation sequences, their comparison is
instead with the best fixed price in hindsight. A complementary
operations research literature studies the exploration--exploitation
tradeoff in learning an unknown demand function. For example,
\citet{BesbesZeevi2009} develop pricing policies and regret bounds
relative to a seller who knows the demand function, in a setting
with finite inventory. Our model has no intertemporal inventory
constraint and compares revenue with the sum of buyers' realized
values. Thus, differences in regret rates across these settings
reflect differences in both information and benchmarks.

\xhdr{Stationary contextual pricing and geometric search.}
\citet{CLP-20} study feature-based pricing with an unknown linear
valuation model and develop ellipsoid-based algorithms with regret
polynomial in $\dimension$ and logarithmic in $\TT$.
They also introduce the robust \shallowpricing{} procedure, which
relaxes feedback constraints to accommodate small valuation noise.
Our algorithms use this geometric primitive.
Related geometric methods appear in multidimensional binary search
\citep{LobelEtAl2018} and contextual search via intrinsic volumes
\citep{LS-18}.
\citet{LLS-21} obtain an
$O(\dimension\log\log\TT+\dimension\log\dimension)$ regret bound
for stationary contextual pricing under their normalization and
develop extensions to more general contextual search problems.
These works exploit a fixed underlying parameter or hypothesis.
In our setting, feedback constraints collected at different times
may correspond to different hidden parameters. Consequently,
shrinking an uncertainty set need not localize the current parameter. Extension models have been studied in \citet{KLPS-21,LNOPZ-25,GBCCP-25,FMPZ-26a,FMPZ-26b,CFGLPR-26}.
Our contribution is to control this loss of consistency while
retaining the exploration guarantees of geometric search.
Multiscale methods themselves also appear in prior contextual
search algorithms, including \citet{LLS-21}; here, the scales
represent different tolerances for parameter drift. 

\xhdr{Contextual pricing with stochastic valuation noise.}
A substantial literature assumes that valuations consist of a
linear contextual component and an additional random shock.
\citet{JavanmardNazerzadeh2019} study high-dimensional pricing
with sparse parameters and develop regularized likelihood methods.
\citet{XuWang2021} establish logarithmic regret guarantees with
a known noise distribution, including for adversarial contexts.
When the noise distribution is unknown,
\citet{XuWang2022} distinguish learning a linear pricing policy
from learning a linear valuation model with agnostic noise, while
\citet{LuoEtAl2022} combine parameter exploration with a UCB
procedure for learning the noise distribution and maximizing revenue.
More recently, \citet{XuWang2026} study unknown noise distributions
that induce discontinuous demand, allowing jumps and atoms under
stochastic, well-conditioned contexts.
These models introduce statistical uncertainty beyond the unknown
linear parameter. Their distribution-aware oracle generally knows
the demand model but does not observe each buyer's realized noise
before pricing. Our noiseless model instead permits an oracle
that knows the current parameter and context to extract the
buyer's full value. Accordingly, the associated regret guarantees
address different benchmarks and cannot be compared solely by
their dependence on $\TT$.

\xhdr{Non-contextual pricing against a moving value.}
The closest predecessor to our identical-context setting is
\citet{LemeEtAl21MovingTarget}, who study changing scalar values
with binary purchase feedback and the same first-best benchmark.
They analyze adversarial evolution under pointwise bounds on
consecutive value changes, as well as stochastic evolution.
Their adversarial revenue guarantees cover a fixed changing rate
and unknown, non-increasing sequences of changing-rate bounds.
The monotonicity condition concerns these bounds, not the values
themselves. Their algorithms combine localization with randomized
checking rounds to detect discrepancies between the current
value and the maintained interval.
Our formulation assumes a known aggregate variation budget
$\TVUB$ and permits this variation to be distributed arbitrarily
over time. Our identical-context algorithm uses periodic
restarts and deterministic inconsistency restarts, together with
a price margin that controls their frequency. It achieves
$O((\sqrt{\TT\TVUB}+1)\log\TT)$ regret and supplies the restart
mechanism used in our contextual algorithm. The two formulations
also differ in prior information: knowing the aggregate budget
is an assumption of our algorithm, whereas their unknown-rate
algorithms adapt without this information.

\xhdr{Non-stationary pricing with stochastic demand.}
Non-stationary demand learning has also been studied beyond the
moving-value model. \citet{KeskinZeevi2017} analyze pricing with
an unknown changing demand model and characterize regret under
variation constraints. In the contextual setting,
\citet{Javanmard2017} studies a linear valuation model with random
shocks and time-varying coefficients, using projected stochastic
gradient methods. Their guarantees cover both adversarial and
stochastic context sequences and explicitly account for changes
in the coefficients.
\citet{BabyEtAl2023} study non-stationary contextual pricing with
safety constraints. With known strictly log-concave market noise,
they derive dynamic regret guarantees through proper online
learning with exp-concave losses and adapt to an unknown parameter
variation budget. Their benchmark is the sequence of optimal
expected-revenue prices under the current demand model.
These papers already address contextual heterogeneity together
with non-stationarity. Our focus is the noiseless linear-valuation
model, an aggregate Euclidean variation budget, and regret against
full surplus, without assuming the smooth demand probabilities
used in their statistical analyses.

Change detection provides another approach to non-stationary
pricing. \citet{ZhaoEtAl2026} develop a policy based on online
change-point detection for high-dimensional generalized linear
demand models, with guarantees that account for sparsity and
the number of changes. \citet{JiangZhao2026} develop multiscale
change-point detection that adapts to both piecewise stationarity
and a design-adjusted parameter variation budget, without knowing
these measures in advance. Their model uses stochastic contexts
and generalized linear demand, and their oracle maximizes
expected revenue. Their scales organize statistical detection
over time. In our algorithm, scales index simultaneous
relaxations of the same geometric feedback constraints and
determine whether localization is reliable enough for exploitation.
This distinction also clarifies that our contribution is not the
general use of restarts or multiple scales in non-stationary pricing.

\xhdr{Non-stationary bandits and restart methods.}
\citet{BGZ-14} study stochastic multi-armed bandits under
a variation budget on mean rewards and establish near-optimal
dynamic regret through periodic restarts of Exp3.
\citet{CheungEtAl2019} develop sliding-window methods and a
bandit-over-bandit approach for non-stationary linear bandits.
\citet{ChenEtAl2019} give an adaptive contextual bandit algorithm
that uses replay phases to detect changes without prior knowledge
of the amount of non-stationarity.
These works motivate forgetting outdated observations and
combining exploration with change detection. However, their
variation measures and reward structures differ from ours.
Even with a fixed context, an arbitrarily small decrease in
the hidden parameter can change a purchase into a rejection
at a fixed positive price, causing a constant revenue drop.
Thus, a small parameter variation budget does not imply a
comparably small uniform variation budget for revenue functions.
Generic reward-variation guarantees therefore do not directly
yield our regret bound. Our analysis instead uses the geometry
of purchase feedback and the asymmetric consequences of
underpricing and overpricing.

\xhdr{Dynamic regret and online convex optimization.}
Dynamic regret against a changing comparator is a classical
objective in online optimization.
\citet{Zinkevich2003} analyzes online convex optimization with
moving comparators, and \citet{BesbesEtAl2015} study stochastic
optimization under variation budgets on the objective functions.
\citet{BabyWang2021} establish optimal dynamic regret guarantees
for exp-concave online learning when improper predictions are
allowed. The pricing results of \citet{BabyEtAl2023} connect
this literature to constrained estimation and stochastic demand.
Our revenue loss is discontinuous and is not a convex function
of the posted price, so these guarantees do not apply directly.
Nevertheless, the distinction between comparator movement,
objective variation, and parameter variation is relevant here:
our budget constrains hidden features, while contexts may vary
arbitrarily. In particular, even when $\TVUB=0$, both buyer
values and first-best prices may change across rounds.

\xhdr{Corruption-robust contextual search.}
\citet{KrishnamurthyEtAl2023} study contextual search when some
responses are adversarially corrupted and design algorithms
whose performance degrades with the number of corruptions.
\citet{LemeEtAl2022} develop density-based methods for
adversarially noisy contextual search, obtaining guarantees
for symmetric and tolerance-based losses.
These works share the concern that intersecting all observed
constraints can eliminate the correct hypothesis.
The source of inconsistency differs in our model: feedback
is accurate for the current buyer, but the hidden parameter
itself moves. A single persistent parameter shift may make
many subsequent observations inconsistent with an earlier
parameter while contributing only once to our variation budget.
Moreover, a small symmetric prediction error does not ensure
small pricing regret, because even slight overpricing can
eliminate the entire revenue from a round.
Thus, corruption budgets and search-loss guarantees require
additional arguments before they can be used for our objective.

\section{Preliminaries}
\label{sec:problem-setup}

We introduce the contextual dynamic pricing problem in a non-stationary environment.

\xhdr{Environment.} There is a single seller who interacts with buyers over $\TT$ discrete rounds. In each round $t\in[\TT]\triangleq \{1,2,\dots,\TT\}$, a new buyer, indexed by $t$, arrives. Each buyer $t$ has a public $\dimension$-dimensional context vector $\uvec_t\in[0,1]^\dimension$ with\footnote{Throughout this paper, we use $\norm{\cdot}$ to denote the Euclidean norm.} $\norm{\uvec_t}=1$ and a private $\dimension$-dimensional feature vector $\hparam_t\ge0, \norm{\hparam_t}\le 1$. After observing the public context $\uvec_t$, the seller posts a take-it-or-leave-it price $\price_t\in[0,1]$ for a single product. Buyer $t$'s value (i.e., willingness to pay) takes the linear form $\val_t\triangleq\ip{\hparam_t}{\uvec_t}\in[0,1]$. Thus, buyer $t$ purchases the product at price $\price_t$ if and only if her value is at least the price, i.e., $\val_t\geq\price_t$. The seller observes only whether the buyer purchases the product and does not observe her value. 
(We denote the purchase feedback by
$\fb_t\triangleq\indicator{\val_t\geq\price_t}\in\{0,1\}$.)
The seller's goal is to maximize her revenue, i.e., the total payments from all $\TT$ buyers. We assume $\TT\geq2$.
No stochastic assumptions are imposed on the context or
feature sequences.

\xhdr{Dynamic regret minimization.} We evaluate an online learning algorithm by its regret relative to the \emph{first-best benchmark} (i.e., full surplus), $\sum_{t\in[\TT]}\val_t=\sum_{t\in[\TT]}\ip{\hparam_t}{\uvec_t}$. This benchmark is achieved by posting the full-surplus-extraction price $\price_t^*=\val_t$ for each buyer $t$. Specifically, given a sequence of $\TT$ buyers described by $\{\uvec_t,\hparam_t\}_{t\in[\TT]}$, the dynamic regret of an algorithm $\ALG$ is
\begin{align*}
    \Reg[\ALG]
    \triangleq
    \sum\nolimits_{t\in[\TT]} \ip{\hparam_t}{\uvec_t}
    -
    \sum\nolimits_{t\in[\TT]} \price_t \cdot
    \indicator{\ip{\hparam_t}{\uvec_t}\geq\price_t},
\end{align*}
where $\{\price_t\}_{t\in[\TT]}$ are the prices posted by $\ALG$.

\xhdr{Non-stationarity upper bound.}
We measure the \emph{non-stationarity} of the unknown feature sequence $\hparams=(\hparam_t)_{t\in[\TT]}$ by its total variation,
$\TV(\hparams)\triangleq\sum_{t\in[\TT-1]}\norm{\hparam_{t+1}-\hparam_t}$.
Although the feature sequence $\hparams$ remains unobserved by the seller throughout the $\TT$ rounds, we assume that the seller knows an upper bound $\TVUB$ on its total variation in advance, i.e., $\TV(\hparams)\leq\TVUB$. We focus on $\TVUB\in[0,\TT]$; larger variation budgets
admit the trivial regret guarantee $\TT$.
In the special case of zero non-stationarity, i.e., $\TVUB=0$, our problem reduces to the classical stationary contextual dynamic pricing problem studied in \citet{KL-03,LS-18,CLP-20,LLS-21}.

\section{Warm-Up: Optimal Dynamic Pricing for Identical Contexts}
\label{sec:identical-context}

We begin with the identical-context environment, where all buyers have the same public context, denoted by $\uvec$. This section serves as a warm-up for the more general environment with heterogeneous contexts: its algorithmic ingredients and regret analysis closely parallel, and lay the groundwork for, the design and analysis of our main algorithm in \Cref{sec:contextual}.

Although the public contexts are identical, the private features $\{\hparam_t\}_{t\in[\TT]}$ may still vary across buyers. We can therefore view this setting as a pricing problem without public contexts, in which buyers have potentially different private values $\{\val_t\}_{t\in[\TT]}$.
For ease of presentation, we omit $\{\uvec_t,\hparam_t\}_{t\in[\TT]}$ in this section and work directly with the value sequence $\{\val_t\}_{t\in[\TT]}$. Under the normalization in our model, $\val_t\in[0,1]$ for every buyer $t$. Moreover, since identical context $\uvec$ is unit-length (i.e., $\norm{\uvec}=1$), we have $
    \sum_{t\in[\TT - 1]}|\val_{t+1}-\val_t|
    \leq
    \sum_{t\in[\TT - 1]}\norm{\hparam_{t+1}-\hparam_t}
    \leq \TVUB$.
We consider non-stationarity upper bounds $\TVUB\in[0,\TT]$.

Below, we present the algorithm for this environment and its dynamic regret guarantee. We then conclude the section with a sketch of the regret analysis.

\xhdr{Algorithm overview.}
For stationary online learning problems, a standard approach is
\emph{explore-then-exploit} (ETE), also commonly called
\emph{explore-then-commit} in the bandit literature
\citep[Chapter~6]{LS-20}. When all buyers have the same value, this approach can be implemented by first conducting binary search for $\lceil\log_2\TT\rceil$ rounds to locate the value within an interval of width at most $1/\TT$. This exploration phase incurs at most $\lceil\log_2\TT\rceil$ regret. During the subsequent exploitation phase, posting the lower endpoint of this interval in every remaining round incurs at most $O(1)$ additional regret.\footnote{In the setting with an unknown identical value, the optimal regret bound of $O(\log\log\TT)$ can be achieved by a modified ETE approach that replaces ordinary binary search with an asymmetric search procedure \citep{KL-03}.}

For non-stationary environments, a standard approach is to restart
a learning algorithm \emph{periodically}. For example, in non-stationary
multi-armed bandit problems, \citet{BGZ-14} show
that periodically restarting Exp3 achieves near-optimal dynamic
regret under a variation-budget constraint. However, the revenue structure of dynamic pricing creates an additional difficulty. An arbitrarily small decrease in a buyer's value can reduce the revenue from a previously optimal price to zero: if the seller continues posting $\price_{t+1}=\val_t$ but $\val_{t+1}<\val_t$, the buyer rejects the offer. Thus, even an arbitrarily small amount of variation can cause constant regret in every subsequent round until the next periodic restart, provided the buyer's value remains bounded away from zero.\footnote{For comparison, in a bandit problem with a fixed set of arms, an arm that was optimal at an earlier round has a current suboptimality gap at most twice the largest change in any arm's mean reward since that round.}

Motivated by this difficulty, our algorithm supplements periodic restarting with an \emph{inconsistency restart}. Informally, if a buyer rejects a price that the algorithm posts during exploitation, the algorithm immediately restarts ETE. However, inconsistency restarting alone cannot guarantee sublinear regret either. To see this, consider a sequence in which all buyers during exploration have the same value. Immediately after exploration ends, the value increases by $\TVUB$ and remains at this higher level for the rest of the horizon, where $\TVUB>0$ is chosen so that all values remain in $[0,1]$. No inconsistency restart occurs because buyers continue accepting the exploitation price. The regret in each exploitation round is at least $\TVUB$, leading to total regret of $\Omega(\TVUB\TT)$ when exploration takes $O(\log\TT)$ rounds. This regret is linear in $\TT$ for any fixed admissible $\TVUB>0$.

Our algorithm therefore combines periodic and inconsistency restarts. We call the consecutive rounds from initialization or a restart until the next restart a \emph{batch}. Periodic restarting is controlled by a \emph{window-size parameter $\Wsize\in[\TT]$}, which specifies the maximum number of rounds in a batch. This parameter balances the regret caused by non-stationarity during exploitation against the exploration cost of frequent periodic restarts.

Inconsistency restarting is controlled by a \emph{price-margin parameter $\del\in(0,1]$}. This parameter determines both the target interval width during exploration and the price posted during exploitation. Specifically, once binary search produces an interval of width at most $\del$, the algorithm posts its lower endpoint minus $\del$, clipped at zero, during the subsequent exploitation phase. We call this price the \emph{lower-envelope price} (LEP). Intuitively, $\del$ balances the regret from conservative pricing during exploitation against the exploration cost of frequent inconsistency restarts. The formal algorithm is given in \Cref{alg:1D-search}.

\begin{algorithm}[t]
\footnotesize
\caption{Pricing for the identical-context environment}
\label{alg:1D-search}
\begin{algorithmic}[1]
\STATE \textbf{Input:}
$\TT\in\Nspace$, $\TVUB\in[0,\TT]$
\STATE \textbf{Parameters:}
$\del\gets\max\{1/\TT,\sqrt{\TVUB/\TT}\}$,
$\Wsize\gets\min\{\TT,\lceil\sqrt{\TT/\TVUB}\rceil\}$
\hfill$\triangleright$ set $\Wsize \gets T$ when $\TVUB = 0$
\STATE $t\gets1$, $i\gets0$
\WHILE{$t\leq\TT$}
    \STATE $i\gets i+1$,
    $\batchend\gets\min\{\TT,t+\Wsize-1\}$,
    $[\pimin,\pimax]\gets[0,1]$
    \WHILE{$t\leq\batchend$ and $\pimax-\pimin>\del$}
        \STATE Post $\price_t\gets(\pimin+\pimax)/2$;
        observe $\fb_t$
        \STATE \textbf{if} $\fb_t=1$,
        set $\pimin\gets\price_t$;
        \textbf{else} set $\pimax\gets\price_t$
        \STATE $t\gets t+1$
    \ENDWHILE
    \STATE $\punderi\gets\max\{0,\pimin-\del\}$
    \WHILE{$t\leq\batchend$}
        \STATE Post $\price_t\gets\punderi$;
        observe $\fb_t$; set $t\gets t+1$
        \STATE \textbf{if} $\fb_{t-1}=0$,
        \textbf{break}
        \hfill$\triangleright$ inconsistency restart
    \ENDWHILE
\ENDWHILE
\end{algorithmic}
\end{algorithm}

\xhdr{Regret guarantee and its analysis.}
We now state the regret guarantee in \Cref{thm:1d-search}.

\begin{theorem}
    \label{thm:1d-search}
    In the identical-context environment, given any non-stationarity upper bound $\TVUB\in[0,\TT]$, set $\Wsize = \min\{T,\lceil\sqrt{T/\TVUB}\rceil\}$ and $\del = \max\{1/\TT,\sqrt{\TVUB/\TT}\}$\footnote{with the convention $\Wsize=\TT$ when $\TVUB=0.$}.
    Then the dynamic regret of \Cref{alg:1D-search} is at most
    $O(\sqrt{\TT\TVUB}\log\TT+\log\TT)$.
Moreover, for every $\TVUB\in[0,\TT]$ and every deterministic online algorithm,
    there exists an admissible instance with total variation at most
    $\TVUB$ on which the algorithm incurs dynamic regret of at least
    $\Omega(\sqrt{\TT\TVUB}+\log\log\TT)$.
\end{theorem}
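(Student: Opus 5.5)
We prove the upper bound by decomposing regret batch by batch, and the lower bound by combining two separate constructions.

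\textbf{Upper bound: per-batch decomposition.} Fix a batch $i$ spanning rounds $[s_i,e_i]$, and let $\TVpB_i\triangleq\sum_{t=s_i}^{e_i-1}|\val_{t+1}-\val_t|$ be its internal variation, so $\sum_i\TVpB_i\le\TVUB$. Each batch has an exploration phase of at most $\lceil\log_2(1/\del)\rceil\le\lceil\log_2\TT\rceil$ rounds, and every exploration round costs regret at most $1$.

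The key invariant concerns the interval $[\pimin,\pimax]$ maintained by binary search. For every exploration round $t$ in the batch, $\val_t$ lies in the interval that was current at time $t$. Because values move by at most $\TVpB_i$ inside the batch, at the end of exploration every $\val_t$ in the batch lies in $[\pimin-\TVpB_i,\pimax+\TVpB_i]$. This follows by induction: each halving is consistent with the observed value at that round, and the subsequent intervals are nested.

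Now consider an exploitation round $t$ in batch $i$.
\begin{itemize}
\item If the buyer accepts, the regret is $\val_t-\punderi\le \pimax-\pimin+\del+\TVpB_i\le 2\del+\TVpB_i$. The clipping at $0$ only helps here.
\item A rejection happens only when $\val_t<\pimin-\del$, that is, when the value has drifted down by more than $\del$ from some exploration-round value. This forces $\TVpB_i>\del$.
\end{itemize}
So each inconsistency-ending batch has internal variation greater than $\del$, and its single rejected round costs at most $1$.

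\textbf{Upper bound: summing.} The number of batches is at most the number of periodic batches, $\lceil\TT/\Wsize\rceil$, plus the number of inconsistency restarts, which is at most $\TVUB/\del$. Both are $O(\sqrt{\TT\TVUB}+1)$ under the chosen parameters; in particular $\TVUB/\del\le\sqrt{\TT\TVUB}$. Hence exploration plus rejected rounds cost $O((\sqrt{\TT\TVUB}+1)\log\TT)$.

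Accepted exploitation rounds contribute at most $\sum_i |B_i|(2\del+\TVpB_i)\le 2\del\TT+\Wsize\sum_i\TVpB_i\le 2\del\TT+\Wsize\TVUB$. With $\del=\max\{1/\TT,\sqrt{\TVUB/\TT}\}$ this gives $2\del\TT=O(\sqrt{\TT\TVUB}+1)$. With $\Wsize\le\sqrt{\TT/\TVUB}+1$ it gives $\Wsize\TVUB\le\sqrt{\TT\TVUB}+\TVUB\le 2\sqrt{\TT\TVUB}$, using $\TVUB\le\TT$. The case $\TVUB=0$ reduces to a single batch with no rejections.

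The delicate step is the invariant, in particular making sure that the drift bound uses only variation inside the batch, so that summing over batches is legitimate. We will state it as a lemma before the summation.

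\textbf{Lower bound.} The $\Omega(\log\log\TT)$ term follows directly from the stationary identical-value lower bound of \citet{KL-03}, since $\TVUB=0$ instances are admissible for every budget. For the $\sqrt{\TT\TVUB}$ term, assume $\TVUB\ge1/\TT$, since otherwise the $\log\log\TT$ term dominates.

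Partition the horizon into $K=\lfloor\sqrt{\TT\TVUB}\rfloor$ blocks of length $L\approx\sqrt{\TT/\TVUB}$, and let $\epsilon=\sqrt{\TVUB/\TT}/4$. Against the deterministic algorithm, the adversary chooses in each block a constant value in $\{1/2,1/2+\epsilon\}$, adaptively and round by round.
\begin{itemize}
\item Consider the first round of a block whose price lies in $(1/2,1/2+\epsilon]$. The adversary fixes the block value at $1/2$, and that round costs regret about $1/2$.
\item Otherwise no price in the block lands in that range, and the adversary picks value $1/2+\epsilon$. Every price is then either $\le1/2$, costing $\ge\epsilon$ per round, or $>1/2+\epsilon$, costing about $1/2$.
\end{itemize}
In this second case the value sequence is not revealed by the feedback. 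More simply, the adversary can commit lazily: since the algorithm is deterministic, feedback is the same under both choices until the first price in $(1/2,1/2+\epsilon]$.

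Each block therefore costs $\Omega(\min\{1,L\epsilon\})=\Omega(1)$. The total variation is at most $2K\epsilon\le\TVUB$. This yields total regret $\Omega(K)=\Omega(\sqrt{\TT\TVUB})$, and taking the maximum of the two constructions gives the lower bound.
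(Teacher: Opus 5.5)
Your upper-bound argument follows the paper's proof. The ingredients match: an exploration cost of $\lceil\log_2(1/\del)\rceil$ per batch, accepted LEP rounds costing at most $2\del+\TVpB_i$, each rejection certifying more than $\del$ variation inside its batch, and at most $1+\TT/\Wsize+\TVUB/\del$ batches. Using internal variation instead of the diameter is harmless.

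However, the ``key invariant'' you state is false. Take $\del<1/2$. At the first round of a batch the price is $1/2$; if $\val_1=0.6$, the interval becomes $[1/2,1]$. If then $\val_2=0.4$, the value $\val_2$ lies outside the interval current at round $2$. Moreover, even if the invariant held, nesting only gives $\val_t\in I_t\supseteq I_{\mathrm{final}}$, which does not place $\val_t$ near the final interval.

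The conclusion you actually need is true:
\[
p_i^{\min}-\TVpB_i\le\val_t< p_i^{\max}+\TVpB_i \quad\text{for all } t\in\batch_i .
\]
It follows from a witness argument rather than induction. The final $p_i^{\min}$ is either $0$ or the price of some accepted exploration round $j$, so $\val_j\ge p_i^{\min}$. The final $p_i^{\max}$ is either $1$ or the price of some rejected exploration round $k$, so $\val_k<p_i^{\max}$. Within-batch drift then gives both inequalities. This is exactly the paper's argument, and it is also what your rejection step (``drifted down by more than $\del$ from some exploration-round value'') implicitly uses. With this replacement the upper bound is complete.

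For the lower bound you take a genuinely different route. The paper gives no construction: it imports $\Omega(\log\log\TT)$ from \citet{KL-03}. It obtains $\Omega(\sqrt{\TT\TVUB})$ by adapting \citet[Theorem~4.2]{LemeEtAl21MovingTarget}, whose instances obey the stronger per-round cap $|\val_{t+1}-\val_t|\le\TVUB/\TT$.

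Your lazy two-value block adversary is self-contained and correct. Until the first price in $(1/2,1/2+\epsilon]$, both candidate values produce identical feedback, so the deterministic algorithm's prices are already determined. Each block then costs at least $\min\{1/2,L\epsilon\}\ge 1/8$ when $L=\lfloor\sqrt{\TT/\TVUB}\rfloor$. The total variation is at most $K\epsilon\le\TVUB/4$.

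Your construction is simpler because it spends the budget in jumps of size $\Theta(\sqrt{\TVUB/\TT})$ at block boundaries. The aggregate budget permits such jumps, but a per-round cap of $\TVUB/\TT$ would not. The cited construction buys the stronger statement that the $\sqrt{\TT\TVUB}$ rate persists even for slowly drifting values, which is more than this theorem requires.
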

Combining the upper and lower bounds in
\Cref{thm:1d-search}, the regret guarantee of
\Cref{alg:1D-search} is optimal among deterministic online
algorithms, up to logarithmic factors in $\TT$, for every
$\TVUB\in[0,\TT]$. The lower bound combines the
variation-dependent term $\Omega(\sqrt{\TT\TVUB})$, obtained
by adapting the construction in
\citet[Theorem~4.2]{LemeEtAl21MovingTarget}, with the stationary
lower bound $\Omega(\log\log\TT)$ from \citet{KL-03}.
The former considers the stronger restriction
$|\val_{t+1}-\val_t|\leq\TVUB/\TT$ on changes between
consecutive rounds,\footnote{Under the assumption that consecutive value changes are bounded by unknown, nonincreasing rates $\epsilon_t$ satisfying
$\sum_{t\in[\TT-1]}\epsilon_t\leq\TVUB$,
\citet{LemeEtAl21MovingTarget} propose a randomized
pricing algorithm whose analysis yields expected regret of
${O}(\sqrt{\TVUB\TT}\log \TT+\sqrt{\TT} \log \TT)$.
Their algorithm organizes learning into phases and uses randomly
selected checking rounds to detect inconsistencies.
Our inconsistency test is deterministic and also serves as a
building block of our algorithm for heterogeneous contexts.} while the latter applies when all buyers
have the same value.
We conclude this section with a sketch of the upper-bound proof. The full proof appears in \Cref{app:noncontextual-upper}.

To bound the dynamic regret of \Cref{alg:1D-search}, we bound the regret from exploration and exploitation separately. During exploration, binary search takes at most $\lceil\log_2(1/\del)\rceil$ rounds per batch. Since each round incurs at most one unit of regret, the total exploration regret is at most
$m\lceil\log_2(1/\del)\rceil$, where $m$ is the number of batches. Periodic restarts contribute at most $\TT/\Wsize$ completed batches, and each inconsistency restart requires at least $\del$ variation within its batch. Hence, $m\leq 1+{\TT}/{\Wsize}+{\TVUB}/{\del}
    =O(\sqrt{\TT\TVUB}+1)$.
Since $\del\geq 1/\TT$, the total exploration regret is therefore at most
$
    O(
        (\sqrt{\TT\TVUB}+1)\log\TT
    )
    =
    O(\sqrt{\TT\TVUB}\log\TT+\log\TT)$.

To analyze exploitation, let $\batch_i\subseteq[\TT]$ denote the rounds in batch $i$, and define its value range as $\TVpB_i\triangleq\max_{s,t\in\batch_i}|\val_s-\val_t|$. These within-batch value ranges satisfy $\sum_{i\in[m]}\TVpB_i\leq\TVUB$. Each accepted exploitation round in batch $i$ incurs regret of at most $2\del+\TVpB_i$. Each rejected exploitation round incurs at most one unit of regret and immediately ends the batch, so there is at most one such round per batch. Since each batch contains at most $\Wsize$ rounds, the total exploitation regret is at most $2\TT\del+\sum_{i\in[m]}|\batch_i|\TVpB_i+m\leq2\TT\del+\Wsize\TVUB+m=O(\sqrt{\TT\TVUB}+1)$. Combining the exploration and exploitation bounds proves the stated upper bound.

\section{Optimal Dynamic Pricing using Multi-Scale Shallow Pricing}
\label{sec:contextual}

\newcommand{\spscale}{\lambda}
\newcommand{\spscaleSet}{\Lambda}

In this section, we turn to the more general environment in which buyers have heterogeneous $\dimension$-dimensional context vectors. We present our main result: an online learning algorithm whose dynamic regret has optimal dependence on $\TT$ and the non-stationarity upper bound $\TVUB$, up to logarithmic and dimension-dependent factors. Our algorithm (\Cref{alg:multiscale-contextual}) and its regret analysis extend the approach developed for the identical-context environment in \Cref{sec:identical-context}.

Compared with the identical-context setting, our algorithm and its analysis introduce two main technical ingredients.

The first ingredient is to replace binary-search exploration with \emph{shallow pricing}, introduced by \citet{CLP-20} for stationary contextual dynamic pricing, while retaining lower-envelope pricing (LEP) for exploitation. With identical contexts, uncertainty concerns a single scalar value and can be represented by an interval. With heterogeneous contexts, uncertainty concerns a vector, and learning its projection in one context direction need not provide accurate information in another. Shallow pricing maintains an ellipsoidal approximation of the feasible region and uses its width in the current context direction to determine whether further exploration is needed. Thus, exploration and exploitation may be interleaved as different contexts arrive.

The second ingredient is to maintain \emph{multiple copies of the geometric learning procedure, each with a price-margin parameter $\spscale$ drawn from a dyadic set $\spscaleSet=\{\del,2\del,\ldots,2^L\del\}$}, whose largest scale covers the diameter of the feature domain. This multiscale construction addresses a difficulty arising from the combination of heterogeneous contexts and non-stationary features when $\dimension\geq2$. As illustrated in Example~\ref{ex:narrow-misplaced-set}, a relaxed feasible set can become narrow in the current context direction while remaining far from the current hidden feature. Maintaining multiple scales ensures that a larger-scale set can still provide a valid certificate of localization. This particular difficulty does not arise in either the stationary contextual setting of \citet{CLP-20} or the non-stationary identical-context setting of \Cref{sec:identical-context}, where a single price-margin scale suffices for the respective regret guarantees.

\xhdr{Shallow pricing with LEP.} Our contextual algorithms build on the geometric approach of
\citet{CLP-20}: each purchase or rejection provides a halfspace
constraint on the hidden feature vector. For a nonempty compact
convex set $\Kset\subseteq\Rspace^\dimension$, define its width
in context direction $\uvec$ as
\begin{align*}
    \width(\Kset,\uvec)
    \triangleq
    \max\nolimits_{\xvec\in\Kset}\ip{\xvec}{\uvec}
    -
    \min\nolimits_{\xvec\in\Kset}\ip{\xvec}{\uvec}.
\end{align*}
At scale $\spscale$, we maintain a relaxed feasible set
$\Sset_t^{(\spscale)}$ and an enclosing ellipsoid
$\Eset_t^{(\spscale)}$, where
\begin{align*}
    \Eset_t^{(\spscale)}
    \triangleq
    \left\{
        \xvec\in\Rspace^\dimension:
        (\xvec-\ellcenter_t^{(\spscale)})^\top
        (\ellmat_t^{(\spscale)})^{-1}
        (\xvec-\ellcenter_t^{(\spscale)})
        \leq 1
    \right\},
    \;\;
    \mbox{and}\;\;
    \width(\Eset_t^{(\spscale)},\uvec_t)
    =2\sqrt{\uvec_t^\top\ellmat_t^{(\spscale)}\uvec_t}.
\end{align*}
Here, $\ellcenter_t^{(\spscale)}$ is the center and
$\ellmat_t^{(\spscale)}$ is symmetric positive definite.
The feasible set stores the feedback constraints and determines
the exploitation price; the ellipsoid determines whether
further exploration is needed. We initialize the feasible set and its enclosing ellipsoid as
$\Sset_0=\{\xvec\in\Rspace^\dimension:
\xvec\geq0,\ \norm{\xvec}\leq1\}$
and
$\Eset_0=\{\xvec\in\Rspace^\dimension:
\norm{\xvec}\leq1\}$, respectively.

The \shallowpricing{} procedure explores at scale $\spscale$
only when
$\width(\Eset_t^{(\spscale)},\uvec_t)>4\dimension\spscale$.
It proposes the midpoint price
$\midprice_t=\ip{\ellcenter_t^{(\spscale)}}{\uvec_t}$;
our algorithm posts
$\price_t=\min\{1,\max\{0,\midprice_t\}\}$.
After observing feedback $\fb_t=\indicator{\text{Buyer $t$ accept price $\price_t$}}$, we form the relaxed halfspace
\begin{align*}
    \Hset_t^{(\spscale)}
    \triangleq
    \begin{cases}
        \{\xvec\in\Rspace^\dimension:
          \ip{\xvec}{\uvec_t}\leq\price_t+\spscale\},
        & \fb_t=0,\\
        \{\xvec\in\Rspace^\dimension:
          \ip{\xvec}{\uvec_t}\geq\price_t-\spscale\},
        & \fb_t=1.
    \end{cases}
\end{align*}
Every observation updates the relaxed feasible sets using
the actual price and feedback:
$\Sset_{t+1}^{(\spscale)}
=\Sset_t^{(\spscale)}\cap\Hset_t^{(\spscale)}$.
On an exploration round, only the selected ellipsoid is
updated, using the inferred midpoint feedback
$\virtualfb_t=\fb_t\indicator{\midprice_t\leq1}$.
The operator
$\shallowupdate(\Eset,\uvec;\spscale,\virtualfb)$
denotes the standard shallow update at the unclipped
midpoint, with acceptance if $\virtualfb=1$ and rejection
if $\virtualfb=0$.
For $\midprice_t<0$, the posted price is necessarily accepted,
and its relaxed constraint implies the midpoint-based
constraint.
For $\midprice_t>1$, the midpoint would necessarily be
rejected, and every $\xvec\in\Sset_0$ satisfies
$\ip{\xvec}{\uvec_t}\leq1<\midprice_t+\spscale$.
Thus, clipping preserves the ellipsoidal outer approximation
and the exploration guarantee.
The update formulas are given in
\Cref{app:technical-background}.

For a nonempty compact feasible set $\Sset$, its
\emph{lower-envelope price} in context direction $\uvec$ is
\begin{align*}
    \LEP(\Sset,\uvec)
    \triangleq
    \max\left\{
        0,\min\nolimits_{\xvec\in\Sset}\ip{\xvec}{\uvec}
    \right\}.
\end{align*}
This generalizes lower-endpoint pricing in the identical-context
setting. Since
$\Sset_t^{(\spscale)}\subseteq\Eset_t^{(\spscale)}$,
the ellipsoid also certifies that
$\width(\Sset_t^{(\spscale)},\uvec_t)
\leq\width(\Eset_t^{(\spscale)},\uvec_t)$.

\xhdr{Identical-context approach does not extend directly.}
A natural extension of the identical-context algorithm fixes
a single price-margin parameter $\del$ and maintains a relaxed
feasible set $\Sset_t^{(\del)}$ together with an enclosing
ellipsoid $\Eset_t^{(\del)}$.
In each round, the algorithm explores using \shallowpricing{}
if $\width(\Eset_t^{(\del)},\uvec_t)>4\dimension\del$;
otherwise, it posts
$\price_t=\LEP(\Sset_t^{(\del)},\uvec_t)$.
Every observation updates the relaxed feasible set, while
only exploration rounds update the ellipsoid.
The algorithm restarts when the window of $\Wsize$ rounds
expires, the relaxed feasible set becomes empty, or an
exploitation price is rejected.

A crucial difficulty arises from using a single price-margin
parameter.
In the identical-context setting, each accepted exploitation
round in batch $\batch_i$ incurs regret of at most
$2\del+\TVpB_i$, where $\TVpB_i$ is the within-batch value range.
With heterogeneous contexts, the analogous quantity is the
within-batch parameter diameter,
$\TVpB_i\triangleq
\max_{s,t\in\batch_i}\norm{\hparam_s-\hparam_t}$.
If $\TVpB_i\leq\del$, every parameter in the batch satisfies
all $\del$-relaxed feedback constraints.
Thus, whenever the ellipsoid is sufficiently narrow for
exploitation, the regret is at most $4\dimension\del$.
However, when $\TVpB_i>\del$, even by only a constant factor,
small parameter changes can cause a large localization error,
as the following example illustrates.
\Cref{app:single-scale-contextual} provides a formal analysis of this single-scale
extension.
Optimizing the parameters in the resulting upper bound yields
a dynamic regret guarantee of
$O(
        (\dimension^2
        +
        \dimension^{3/2}\TVUB^{1/3}\TT^{2/3})\log \TT)$.
For fixed dimension, a matching lower bound up to logarithmic
factors shows that this suboptimal rate is unavoidable under
the selected tuning.

\begin{example}[A narrow but misplaced relaxed feasible set]
\label{ex:narrow-misplaced-set}
Let $\dimension=2$, fix $\del\in(0,1/16]$, and write
$\ctxnorm=\sqrt{1+\del^2}$.
Initialize the relaxed feasible set to
$\Sset_0=\{\xvec\in\Rspace^\dimension:
\xvec\geq0,\ \norm{\xvec}\leq1\}$.
Consider two price queries within a batch.\footnote{The first
two prices are prescribed to illustrate the geometry of the
relaxed feedback constraints. We do not claim that they are
generated by the shallow-pricing midpoint rule.}

In round $1$, let
$\uvec_1=(0,1)$,
$\hparam_1=(1/2,1/4+\del)$, and
$\price_1=1/4+\del$.
The buyer accepts because $\val_1=\price_1$, giving the
relaxed constraint $\xvec_2\geq1/4$.
In round $2$, let
$\uvec_2=(\del,1)/\ctxnorm$,
$\hparam_2=(1/2,1/4-2\del)$, and
$\price_2=(1/4+\del^2)/\ctxnorm-\del$.
The value is $\val_2=(1/4-3\del/2)/\ctxnorm$, so
$\price_2-\val_2
=\del(\del+3/2-\ctxnorm)/\ctxnorm>0$.
The buyer therefore rejects, giving the relaxed constraint
$\del\xvec_1+\xvec_2\leq1/4+\del^2$.
After these observations, the relaxed feasible set is
$\Sset^{(\del)}
=\Sset_0\cap
\{\xvec\in\Rspace^\dimension:
\xvec_2\geq1/4,\ \del\xvec_1+\xvec_2\leq1/4+\del^2\}$.

From round $3$ onward, let
$\uvec_t=(1,0)$ and
$\hparam_t=(1/2,1/4+\del)$.
All contexts, parameters, and prescribed prices satisfy
the model assumptions.
The parameter diameter is $3\del$, and the total parameter
variation is $6\del$.
Yet the two relaxed constraints imply
$0\leq\xvec_1\leq\del$ for every
$\xvec\in\Sset^{(\del)}$.
Since both $(0,1/4)$ and $(\del,1/4)$ belong to this set,
$\width(\Sset^{(\del)},\uvec_t)=\del$.

For every round $t\geq3$, the true value is $1/2$, whereas
$\LEP(\Sset^{(\del)},\uvec_t)=0$.
This price is accepted and incurs regret $1/2$.
Its relaxed feedback constraint, $\xvec_1\geq-\del$,
leaves the feasible set unchanged.
Thus, if LEP continues to be used without a restart,
every subsequent round incurs regret $1/2$, despite
the parameter diameter being only $3\del$.
\end{example}

The example shows that a small parameter diameter and a
small directional width of the relaxed feasible set do not,
by themselves, imply a regret bound of
$O(\TVpB_i+\del)$ on accepted exploitation rounds.
Here, $\TVpB_i+\del=4\del$, while the regret remains $1/2$
as $\del$ tends to zero.
The two initial contexts are nearly parallel.
Combining their feedback constraints requires dividing by
$\del$, which can amplify an $O(\del)$ change in one
coordinate into a constant localization error in another.
With identical contexts, all constraints concern the same
scalar value, so this amplification does not occur.

This example illustrates a limitation of reasoning from
the relaxed feasible set alone.
It does not establish that the single-scale algorithm
generates this history: the prescribed queries need not
be its midpoint prices, and a narrow feasible set need
not have a narrow enclosing ellipsoid.
Accordingly, the example explains the obstacle to extending
the one-dimensional exploitation argument, without proving
a regret lower bound for every single-scale tuning.

This example motivates maintaining several relaxation scales,
as we explain after introducing the multiscale algorithm.

\xhdr{Algorithm for general environment: multi-scale shallow pricing.}
Our algorithm retains the two restart mechanisms from the
identical-context setting: periodic restarts controlled by
a window-size parameter $\Wsize$ and inconsistency restarts
triggered by rejected exploitation prices.
We again call the consecutive rounds between restarts a
\emph{batch}.
Within each batch, the algorithm combines shallow-pricing
exploration with lower-envelope pricing (LEP), using multiple
relaxation scales to account for the unknown amount of
parameter variation.

The algorithm is parameterized by a \emph{base price-margin
parameter $\del$}, which determines the smallest relaxation
scale.
We maintain the dyadic scale set
$\spscaleSet=\{\del,2\del,\ldots,2^{\scalelevels}\del\}$,
where $\scalelevels$ is the smallest nonnegative integer
such that $2^{\scalelevels}\del\geq\sqrt{2}$, the diameter
of the feature domain for $\dimension\geq2$.
Each scale $\spscale\in\spscaleSet$ maintains a relaxed
feasible set $\Sset_t^{(\spscale)}$ and an enclosing
ellipsoid $\Eset_t^{(\spscale)}$.
At the beginning of each batch, all feasible sets and
ellipsoids are reset to their initial states.
Each scale represents a different allowance for parameter
variation within the batch.
A scale is \emph{active} if its relaxed feasible set is
nonempty; the largest scale always remains active. We initialize
$\Sset_0=\{\xvec\in\Rspace^\dimension:
\xvec\geq0,\ \norm{\xvec}\leq1\}$
and
$\Eset_0=\{\xvec\in\Rspace^\dimension:\norm{\xvec}\leq1\}$.

For exploration, the algorithm checks the ellipsoid width
in the current context direction at every active scale.
If any active scale $\spscale$ satisfies
$\width(\Eset_t^{(\spscale)},\uvec_t)>4\dimension\spscale$,
the algorithm selects the smallest such scale and posts
the corresponding shallow-pricing price.
The resulting feedback updates the relaxed feasible sets
at all scales, using their respective margins, while only
the ellipsoid at the selected scale receives a shallow-cut
update.
Since a new context may reveal uncertainty in a different
direction, exploration and exploitation can be interleaved
within a batch.

For exploitation, the algorithm requires every active
scale to satisfy its width condition.
It then posts the lower-envelope price computed from
the smallest active scale.
This choice uses the nesting of the relaxed feasible sets:
a smaller scale gives a smaller feasible set and therefore
a weakly higher lower-envelope price.
Meanwhile, a sufficiently large scale still contains the
current hidden feature vector and certifies that the remaining
uncertainty is small.
Together, these properties control the regret on accepted
exploitation rounds even when the smallest active set
does not contain the current parameter.

As in the identical-context setting, the algorithm restarts
after at most $\Wsize$ rounds or immediately after an
exploitation price is rejected.
Periodic restarts limit the regret from changes that remain
undetected because buyers continue accepting the posted
prices.
An inconsistency restart responds to overpricing and
certifies more than $\del$ parameter variation within
the batch, allowing its frequency to be controlled by
$\TVUB$.
The window-size parameter $\Wsize$ balances the cost of
retaining outdated information against the exploration
cost of frequent restarts, while the base margin $\del$
balances pricing accuracy against exploration and
inconsistency-restart costs.
The formal algorithm is given in
\Cref{alg:multiscale-contextual}.

\begin{algorithm}[t]
\footnotesize
\caption{Multiscale shallow pricing with LEP}
\label{alg:multiscale-contextual}
\begin{algorithmic}[1]
\STATE \textbf{Input:}
$\TT\in\Nspace$, $\dimension\in\Nspace$ with
$\dimension\geq2$, $\TVUB\in[0,\TT]$
\STATE \textbf{Parameters:}
$\del\gets\max\{1/\TT,\sqrt{\dimension\TVUB/\TT}\}$,
$\Wsize\gets
\min\{\TT,\lceil\sqrt{\dimension\TT/\TVUB}\rceil\}$
\hfill$\triangleright$ set $\Wsize \gets T$ when $\TVUB = 0$
\STATE $\spscaleSet\gets
\{\del,2\del,\ldots,2^{\scalelevels}\del\}$,
where $\scalelevels$ is the smallest nonnegative integer
with $2^{\scalelevels}\del\geq\sqrt{2}$
\STATE $t\gets1$, $i\gets0$
\WHILE{$t\leq\TT$}
    \STATE $i\gets i+1$,
    $\batchend\gets\min\{\TT,t+\Wsize-1\}$
    \STATE For all $\spscale\in\spscaleSet$, set
    $\Sset_t^{(\spscale)}\gets\Sset_0$ and
    $\Eset_t^{(\spscale)}\gets\Eset_0$
    \WHILE{$t\leq\batchend$}
        \STATE Observe $\uvec_t$; set
        $\spscaleSet_t\gets
        \{\spscale\in\spscaleSet:
        \Sset_t^{(\spscale)}\neq\varnothing\}$
        \STATE $\explorescales_t\gets
        \{\spscale\in\spscaleSet_t:
        \width(\Eset_t^{(\spscale)},\uvec_t)
        >4\dimension\spscale\}$
        \IF{$\explorescales_t\neq\varnothing$}
            \STATE $\spscale_t\gets\min\explorescales_t$,
            $\midprice_t\gets
            \ip{\ellcenter_t^{(\spscale_t)}}{\uvec_t}$,
            $\price_t\gets\min\{1,\max\{0,\midprice_t\}\}$
        \ELSE
            \STATE $\spscale_t^{\min}\gets\min\spscaleSet_t$,
            $\price_t\gets
            \LEP(\Sset_t^{(\spscale_t^{\min})},\uvec_t)$
        \ENDIF
        \STATE Post $\price_t$ and observe $\fb_t$
        \STATE For all $\spscale\in\spscaleSet$, set
        $\Sset_{t+1}^{(\spscale)}
        \gets\Sset_t^{(\spscale)}\cap\Hset_t^{(\spscale)}$
        and $\Eset_{t+1}^{(\spscale)}
        \gets\Eset_t^{(\spscale)}$
        \IF{$\explorescales_t\neq\varnothing$}
            \STATE $\virtualfb_t\gets
            \fb_t\indicator{\midprice_t\leq1}$
            \STATE $\Eset_{t+1}^{(\spscale_t)}
            \gets
            \shallowupdate(
                \Eset_t^{(\spscale_t)},\uvec_t;
                \spscale_t,\virtualfb_t)$
        \ENDIF
        \STATE $t\gets t+1$
        \STATE \textbf{if}
        $\explorescales_{t-1}=\varnothing$ and
        $\fb_{t-1}=0$, \textbf{break}
        \hfill$\triangleright$ inconsistency restart
    \ENDWHILE
\ENDWHILE
\end{algorithmic}
\end{algorithm}

Before stating the dynamic regret guarantee, we revisit
Example~\ref{ex:narrow-misplaced-set} to explain how multiple scales
address the illustrated difficulty.

Insert the first two observations from
Example~\ref{ex:narrow-misplaced-set} into the relaxed feasible sets
at every scale.
At scale $\del$, the resulting set is narrow in direction
$\uvec_t=(1,0)$ but excludes the current parameter
$\hparam_t=(1/2,1/4+\del)$.
At scale $4\del$, the current parameter remains feasible
because the within-batch parameter diameter is only $3\del$.
This larger-scale set also contains $(0,1/4)$, so its width
in direction $\uvec_t$ is at least $1/2$.
For $\del<1/64$, this exceeds the exploration threshold
$4\dimension(4\del)=32\del$.
Any enclosing ellipsoid therefore also violates the width
condition, and the multiscale rule would explore after
this feedback history.
The larger scale provides a reliable certificate of
localization, while the smallest active scale determines
the exploitation price.
To see why this combination suffices, consider a batch
$\batch_i$ with parameter diameter
$\TVpB_i\triangleq
\max_{s,t\in\batch_i}\norm{\hparam_s-\hparam_t}$.
Let $\spscale_i$ be the smallest maintained scale satisfying
$\spscale_i\geq\TVpB_i$.
Then $\spscale_i\leq2\max\{\TVpB_i,\del\}$, and
$\hparam_t\in\Sset_t^{(\spscale_i)}$ throughout the batch.
Thus, scale $\spscale_i$ remains active.
Whenever the algorithm exploits, every active scale,
including $\spscale_i$, satisfies its ellipsoid width
condition.
Let $\spscale_t^{\min}$ denote the smallest active scale.
Since $\spscale_t^{\min}\leq\spscale_i$, nesting gives
$\Sset_t^{(\spscale_t^{\min})}
\subseteq\Sset_t^{(\spscale_i)}$.
Consequently, the posted price satisfies
$\price_t
=\LEP(\Sset_t^{(\spscale_t^{\min})},\uvec_t)
\geq\LEP(\Sset_t^{(\spscale_i)},\uvec_t)$.
On each accepted exploitation round,
$\val_t-\price_t
\leq\width(\Sset_t^{(\spscale_i)},\uvec_t)
\leq\width(\Eset_t^{(\spscale_i)},\uvec_t)
\leq4\dimension\spscale_i
\leq8\dimension\max\{\TVpB_i,\del\}$.
Multiple scales therefore recover a regret bound controlled
by the parameter diameter and the base margin, even when
the smallest active set excludes the current parameter.
A misplaced smaller-scale set may also cause overpricing.
The resulting rejection immediately triggers an
inconsistency restart, and these rejected rounds are
accounted for separately in the regret analysis.
The bound above applies to accepted exploitation rounds
without requiring the smallest active set to contain
the current parameter.

\xhdr{Regret guarantee and its analysis.}
We now state the dynamic regret guarantee of
\Cref{alg:multiscale-contextual}.

\begin{theorem}
\label{thm:multiscale-contextual}
In the contextual environment, given any non-stationarity upper bound
$\TVUB\in[0,\TT]$, set
$\Wsize=\min\{\TT,\lceil\sqrt{\dimension\TT/\TVUB}\rceil\}$
and
$\del=\max\{1/\TT,\sqrt{\dimension\TVUB/\TT}\}$ with the convention $\Wsize=\TT$ when $\TVUB=0$.
Then the dynamic regret of \Cref{alg:multiscale-contextual}
is at most
$O((\dimension^2+\dimension^{3/2}\sqrt{\TT\TVUB})
\log^2\TT)$.

Moreover, for every $\TVUB\in[0,\TT]$ and every deterministic
online algorithm, there exists an admissible instance with
total variation at most $\TVUB$ on which the algorithm
incurs dynamic regret of at least
$\Omega(\sqrt{\TT\TVUB}+\log\log\TT)$.
\end{theorem}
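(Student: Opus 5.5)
The lower bound follows from \Cref{thm:1d-search}. Set all contexts equal to a fixed unit vector $\uvec$ with nonnegative entries. The contextual instance then becomes the identical-context instance, and the total variation of values is at most that of the features. Any deterministic contextual algorithm therefore inherits the $\Omega(\sqrt{\TT\TVUB}+\log\log\TT)$ lower bound. For the upper bound I follow the decomposition used for \Cref{thm:1d-search}. Let $m$ be the number of batches and $\TVpB_i$ the parameter diameter of batch $\batch_i$, so that $\sum_i\TVpB_i\le\TVUB$. I split the regret into three parts: exploration rounds, accepted exploitation rounds, and rejected exploitation rounds.

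\emph{Exploration.} I bound the exploration rounds per batch and per scale by invoking the shallow-pricing guarantee of \citet{CLP-20}, recalled in \Cref{app:technical-background}. If each shallow cut at scale $\spscale$ is taken only when $\width(\Eset,\uvec)>4\dimension\spscale$, then the ellipsoid volume shrinks by a factor $e^{-\Omega(1/\dimension)}$. Widths never fall below roughly $\spscale/\dimension$ along any axis, so the number of cuts at scale $\spscale$ is at most $O(\dimension^2\log(1/\spscale))=O(\dimension^2\log\TT)$. Clipping preserves the outer-approximation property, as argued before the theorem. The relaxed sets contain $\Sset_0\cap\bigcap\Hset$, and the ellipsoid contains the set of points consistent with the midpoint cuts. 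Summing over the $\scalelevels+1=O(\log\TT)$ scales gives $O(\dimension^2\log^2\TT)$ exploration rounds per batch, each with regret at most $1$. Exploration regret is therefore $O(m\,\dimension^2\log^2\TT)$.

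\emph{Number of batches.} Periodic restarts contribute at most $\TT/\Wsize+1$ batches. For inconsistency restarts I argue as in the discussion after Example~\ref{ex:narrow-misplaced-set}. If $\TVpB_i\le\del$, then $\hparam_t\in\Sset_t^{(\del)}$ throughout the batch, so the smallest scale is active. In that case the LEP at scale $\del$ is at most $\min_{\xvec\in\Sset^{(\del)}}\ip{\xvec}{\uvec_t}\le\val_t$, and the price is accepted. A rejected exploitation price thus certifies $\TVpB_i>\del$, giving at most $\TVUB/\del$ such batches. Hence $m\le 1+\TT/\Wsize+\TVUB/\del=O(1+\sqrt{\TT\TVUB/\dimension})$.

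\emph{Exploitation and tuning.} Each rejected exploitation round ends its batch, so these rounds contribute at most $m$. For accepted rounds I use the nesting argument given before the theorem. Take $\spscale_i$ as the smallest scale with $\spscale_i\ge\TVpB_i$; this scale exists because $2^{\scalelevels}\del\ge\sqrt2$. Then $\spscale_i$ is active and satisfies its width condition, and $\Sset^{(\spscale^{\min}_t)}\subseteq\Sset^{(\spscale_i)}\ni\hparam_t$. This yields regret at most $8\dimension\max\{\TVpB_i,\del\}\le8\dimension(\TVpB_i+\del)$ per accepted round. Summing, the accepted-round regret is at most $8\dimension\TT\del+8\dimension\Wsize\TVUB$.

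With $\del=\max\{1/\TT,\sqrt{\dimension\TVUB/\TT}\}$ and $\Wsize\approx\sqrt{\dimension\TT/\TVUB}$, each of the terms $\dimension\TT\del$ and $\dimension\Wsize\TVUB$ is $O(\dimension^{3/2}\sqrt{\TT\TVUB}+\dimension)$. Here $\Wsize\TVUB\le\sqrt{\dimension\TT\TVUB}+\TVUB$ and $\TVUB\le\sqrt{\TT\TVUB}$. The exploration term is $m\,\dimension^2\log^2\TT=O(\dimension^2+\dimension^{3/2}\sqrt{\TT\TVUB})\log^2\TT$. Adding the three parts gives the claimed bound.

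I expect the main obstacle to be the per-scale exploration count. It must hold even though the relaxed feasible sets receive cuts from prices chosen at other scales, while each ellipsoid only receives its own shallow cuts. I would first check that the width test and the \citet{CLP-20} volume argument depend only on the sequence of cuts made at that scale. The hidden parameter's membership plays no role there, so the potential argument applies unchanged to each scale separately.
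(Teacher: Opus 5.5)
Your proposal is correct and follows essentially the same route as the paper. The lower bound is obtained by embedding the identical-context instance, and the upper bound uses the same three-way split: a per-scale shallow-cut exploration count summed over $O(\log\TT)$ scales, the batch count $1+\TT/\Wsize+\TVUB/\del$ (with each rejected exploitation price certifying $\TVpB_i>\del$), and the nesting/right-scale argument for accepted LEP rounds.

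One small slip: $\LEP(\Sset^{(\del)}_t,\uvec_t)=\max\{0,\min_{\xvec\in\Sset^{(\del)}_t}\ip{\xvec}{\uvec_t}\}$ is at least, not at most, that minimum. It is still at most $\val_t$, because both terms inside the maximum are, so your conclusion stands.
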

Combining the upper and lower bounds in
\Cref{thm:multiscale-contextual}, the regret guarantee of
\Cref{alg:multiscale-contextual} is optimal among deterministic
online algorithms, up to logarithmic factors in $\TT$, for
every $\TVUB\in[0,\TT]$ when $\dimension$ is fixed.
The term $O(\dimension^2\log^2\TT)$ accounts for the initial
exploration cost and ensures that the guarantee also covers
the stationary case $\TVUB=0$.
The remaining term scales as $\sqrt{\TT\TVUB}$, matching
the variation-dependent lower bound up to logarithmic
and dimension-dependent factors.
We do not claim that the dependence on $\dimension$ is
optimal.

The lower bound follows directly from
\Cref{thm:1d-search}, since the identical-context environment
is a special case of the contextual environment.
Specifically, any scalar value sequence
$\{\val_t\}_{t\in[\TT]}$ can be embedded by setting
$\uvec_t=(1,0,\ldots,0)$ and
$\hparam_t=(\val_t,0,\ldots,0)$.
For the upper bound, it suffices to analyze
$\TVUB\leq\TT/\dimension$.
When $\TVUB>\TT/\dimension$, the stated bound follows
from the trivial regret bound $\TT$, since
$\dimension^{3/2}\sqrt{\TT\TVUB}\geq\TT$.
We next sketch the analysis for the nontrivial regime;
the full proof is deferred to the \Cref{app:multiscale-contextual}.

Consider the regime $\TVUB\leq\TT/\dimension$.
Let $\batch_1,\ldots,\batch_{\batchcount}$ denote the batches,
and define their parameter diameters by
$\TVpB_i\triangleq
\max_{s,t\in\batch_i}\norm{\hparam_s-\hparam_t}$.
Since the batches are disjoint intervals,
$\sum_{i=1}^{\batchcount}\TVpB_i\leq\TVUB$.
We bound the regret from exploration and exploitation
separately.

We first bound the number of batches.
Periodic restarts contribute at most $\TT/\Wsize$
completed batches.
Every rejected exploitation round certifies more than
$\del$ parameter variation within its batch: otherwise,
the current parameter would remain feasible at scale
$\del$, and the smallest-scale LEP could not exceed
the current value.
Thus, there are at most $\TVUB/\del$ inconsistency
restarts, giving
$\batchcount\leq1+\TT/\Wsize+\TVUB/\del
=O(1+\sqrt{\TT\TVUB/\dimension})$.

For exploration, the shallow-pricing exploration bound
gives at most $O(\dimension^2\log\TT)$ exploration rounds
per scale in each batch.
Since $\del\geq1/\TT$, the number of maintained scales
is $O(\log\TT)$.
Each exploration round incurs at most one unit of regret,
so the total exploration regret is at most
$O(\batchcount\dimension^2\log^2\TT)
=O((\dimension^2+\dimension^{3/2}\sqrt{\TT\TVUB})
\log^2\TT)$.

For exploitation, let $\spscale_i$ be the smallest maintained
scale satisfying $\spscale_i\geq\TVpB_i$.
As established above, this scale remains active throughout
batch $\batch_i$, and each accepted exploitation round
incurs regret of at most
$4\dimension\spscale_i
\leq8\dimension\max\{\TVpB_i,\del\}$.
Using $|\batch_i|\leq\Wsize$ and
$\sum_{i=1}^{\batchcount}\TVpB_i\leq\TVUB$,
the total regret from accepted exploitation rounds is at most
$8\dimension\TT\del
+8\dimension\sum_{i=1}^{\batchcount}
|\batch_i|\TVpB_i
\leq8\dimension\TT\del+8\dimension\Wsize\TVUB
=O(\dimension+\dimension^{3/2}\sqrt{\TT\TVUB})$.
Each rejected exploitation round incurs at most one unit
of regret, and there are at most $\TVUB/\del$ such rounds.
Their total regret is therefore at most
$\TVUB/\del\leq\sqrt{\TT\TVUB/\dimension}$.

Combining these bounds yields
$\Reg[\ALG]
=O((\dimension^2+\dimension^{3/2}\sqrt{\TT\TVUB})
\log^2\TT)$,
as claimed.

\section*{AI Use Statement}

Generative AI tools were used as an assistive tool during the development of
this work, including for improving exposition and organization, checking
notation and internal consistency, discussing proof presentation and examples,
and assisting with literature exploration. All research ideas, mathematical
claims, proofs, and cited references included in the final manuscript were
reviewed and verified by the authors. The authors take full responsibility for
the final content of the paper.

\bibliographystyle{apalike}
\bibliography{refs}

\clearpage
\appendix
\crefalias{section}{appendix}

\section{Technical Background: Geometric Search and Shallow Pricing}
\label{app:technical-background}

\subsection{Background: geometric search in feature-based pricing}
\label{subsec:featdp-background}

Our contextual algorithms build on the geometric viewpoint introduced in
feature-based dynamic pricing by \cite{CLP-20}. In that model, each arriving
product is described by an observed feature vector, the buyer's valuation is
linear in an unknown parameter, and the seller observes only binary
accept/reject feedback after posting a price. Thus, each observation induces a
halfspace constraint on the unknown parameter. Pricing can therefore be viewed
as a geometric search problem: the seller maintains a region of parameters
that remain consistent with past feedback, and each new context direction asks
for the range of possible valuations induced by this region.

For a convex set $K\subseteq\reals^d$ and a context direction $\uvec$, define
the directional width of $K$ along $\uvec$ as
\[
    \width(K,\uvec)
    :=
    \max_{x\in K}\ip{x}{\uvec}
    -
    \min_{x\in K}\ip{x}{\uvec}.
\]
This is the length of the interval of valuations that are still possible in
context $\uvec$ when the hidden parameter is only known to lie in $K$.
Equivalently, if
\[
    b^-(K,\uvec)
    :=
    \min_{x\in K}\ip{x}{\uvec},
    \qquad
    b^+(K,\uvec)
    :=
    \max_{x\in K}\ip{x}{\uvec},
\]
then
\[
    \width(K,\uvec)
    =
    b^+(K,\uvec)-b^-(K,\uvec).
\]

The algorithmic primitive in \cite{CLP-20} is \textsc{EllipsoidPricing}.
Instead of working directly with a possibly complicated polyhedral uncertainty
set, the algorithm maintains an ellipsoidal outer approximation
\[
    \mathcal E_\round
    =
    \mathcal E(A_\round,c_\round)
    :=
    \left\{
        x\in\reals^d:
        (x-c_\round)^\top A_\round^{-1}(x-c_\round)\le1
    \right\},
\]
where $c_\round$ is the center and $A_\round$ is a positive definite matrix. When this primitive is run at scale $\lambda$ in our algorithms, we
write these quantities as $E_t^{(\lambda)}$,
$c_t^{(\lambda)}$, and $A_t^{(\lambda)}$.
For a context $\uvec_\round$, the minimum and maximum possible valuations over
this ellipsoid have the closed forms
\[
    b_\round^-
    =
    \ip{c_\round}{\uvec_\round}
    -
    \sqrt{\uvec_\round^\top A_\round\uvec_\round},
    \qquad
    b_\round^+
    =
    \ip{c_\round}{\uvec_\round}
    +
    \sqrt{\uvec_\round^\top A_\round\uvec_\round}.
\]
Thus
\[
    \width(\mathcal E_\round,\uvec_\round)
    =
    b_\round^+-b_\round^-
    =
    2\sqrt{\uvec_\round^\top A_\round\uvec_\round}.
\]

Given a target accuracy $\eta>0$, \textsc{EllipsoidPricing} behaves as
follows. If
\[
    b_\round^+-b_\round^-\le\eta,
\]
the algorithm exploits by posting the lower endpoint
\[
    \price_\round=b_\round^-.
\]
If instead
\[
    b_\round^+-b_\round^->\eta,
\]
the algorithm explores by posting the midpoint price
\[
    \price_t
    =
    \frac{b_t^-+b_t^+}{2}
    =
    \ip{c_t}{\uvec_t}.
\]

An acceptance, $\fb_\round=1$, implies the central halfspace constraint
\[
    \ip{x}{\uvec_\round}\ge\price_\round,
\]
while a rejection, $\fb_\round=0$, implies
\[
    \ip{x}{\uvec_\round}\le\price_\round.
\]
The next ellipsoid is chosen as the minimum-volume ellipsoid containing the
corresponding half-ellipsoid.

More explicitly, define
\[
    q_\round
    :=
    \frac{A_\round\uvec_\round}
    {\sqrt{\uvec_\round^\top A_\round\uvec_\round}}.
\]
For an acceptance, the central-cut update is
\[
    c_{\round+1}
    =
    c_\round+\frac{1}{d+1}q_\round,
\]
and for a rejection it is
\[
    c_{\round+1}
    =
    c_\round-\frac{1}{d+1}q_\round.
\]
In both cases,
\[
    A_{\round+1}
    =
    \frac{d^2}{d^2-1}
    \left(
        A_\round
        -
        \frac{2}{d+1}
        q_\round q_\round^\top
    \right).
\]
The key fact behind the exploration bound is that each central cut decreases
the ellipsoid volume by a constant factor, while exploration occurs only when
the current direction has large ellipsoidal width.

The robust primitive we use is \textsc{ShallowPricing}, also from
\cite{CLP-20}. It modifies the central-cut update by adding a safety buffer to
the inferred halfspace. 
At scale $\lambda>0$, let
\[
    \widetilde{\price}_t
    :=
    \frac{b_t^-+b_t^+}{2}
    =
    \ip{c_t}{\uvec_t}
\]
denote the standard, unclipped midpoint proposal. Under the standard
\textsc{ShallowPricing} update, an acceptance produces the shifted constraint
\[
    \ip{x}{\uvec_\round}
    \ge
    \widetilde{\price}_t-\lambda,
\]
and a rejection produces the shifted constraint
\[
    \ip{x}{\uvec_\round}
    \le
    \widetilde{\price}_t+\lambda.
\]
Thus the update no longer cuts the ellipsoid through its center; it is a
shallow cut.

The shallow-cut update can also be written in closed form. Let
\[
    \alpha_\round
    :=
    -\frac{\lambda}
    {\sqrt{\uvec_\round^\top A_\round\uvec_\round}}.
\]
The quantity $\alpha_\round$ is the signed depth of the cut. When the
algorithm explores only if
\[
    \width(\mathcal E_\round,\uvec_\round)>4d\lambda,
\]
we have
\[
    \sqrt{\uvec_\round^\top A_\round\uvec_\round}
    >
    2d\lambda,
\]
and hence
\[
    \alpha_\round
    >
    -\frac{1}{2d}.
\]
This is the regime in which the shallow-cut ellipsoid update is valid.

For an acceptance, the shallow-cut center update is
\[
    c_{\round+1}
    =
    c_\round
    +
    \frac{1+d\alpha_\round}{d+1}
    q_\round,
\]
whereas for a rejection,
\[
    c_{\round+1}
    =
    c_\round
    -
    \frac{1+d\alpha_\round}{d+1}
    q_\round.
\]
In both cases, the shape matrix is updated as
\[
    A_{\round+1}
    =
    \frac{d^2}{d^2-1}
    (1-\alpha_\round^2)
    \left(
        A_\round
        -
        \frac{2(1+d\alpha_\round)}
        {(d+1)(1+\alpha_\round)}
        q_\round q_\round^\top
    \right).
\]
\paragraph{Clipping the exploration price to $[0,1]$.}
Our algorithms post
\[
    \price_t := \min\{1,\max\{0,\widetilde\price_t\}\}.
\]
Let
\[
    \widetilde f_t
    :=
    f_t\,\mathbf{1}\{\widetilde\price_t\le 1\}
\]
denote the inferred feedback at the unclipped midpoint. We denote the
corresponding ellipsoid update by
\[
    E_{t+1}
    =
    \operatorname{ShallowUpdate}
    (E_t,u_t;\lambda,\widetilde f_t).
\]
The relaxed feasible
set is updated using the actual posted price $\price_t$ and observed feedback
$f_t$, while the ellipsoid is updated using the standard shallow cut
associated with $\widetilde\price_t$ and $\widetilde f_t$.

If $0\le \widetilde\price_t\le1$, then
$\price_t=\widetilde\price_t$ and $\widetilde f_t=f_t$, so the standard
shallow-pricing update applies unchanged.

Suppose instead that $\widetilde\price_t<0$. Since realized valuations are
nonnegative, $\price_t=0$ is necessarily accepted, and the actual
$\lambda$-relaxed feedback constraint is
\[
    \ip{x}{\uvec_t}\ge-\lambda.
\]
Since $\widetilde\price_t<0$,
\[
    \{x:\ip{x}{\uvec_t}\ge-\lambda\}
    \subseteq
    \{x:\ip{x}{\uvec_t}\ge\widetilde\price_t-\lambda\}.
\]
Thus the actual relaxed constraint is stronger than the standard acceptance
constraint associated with $\widetilde\price_t$.

Finally, suppose that $\widetilde\price_t>1$. Then
$\widetilde f_t=0$. Since every $x\in S_0$ satisfies
$\ip{x}{\uvec_t}\le1$, we have
\[
    \ip{x}{\uvec_t}\le1<\widetilde\price_t+\lambda.
\]
Hence the updated relaxed feasible set is contained in the standard rejection
halfspace
\[
    \{x:\ip{x}{\uvec_t}\le\widetilde\price_t+\lambda\},
\]
regardless of the feedback observed at the clipped price $\price_t=1$.

Therefore, in all cases, the standard shallow cut associated with
$(\widetilde\price_t,\widetilde f_t)$ contains the updated relaxed feasible
set. Thus clipping the posted price to $[0,1]$ preserves the ellipsoidal
outer-approximation invariant and the usual shallow-cut exploration guarantee.

In our non-stationary setting, the relaxation parameter $\lambda$ has a
different interpretation from statistical noise: it is a robustness margin
that allows past feedback constraints to remain compatible with a parameter
that may have moved within the current batch. The single-scale baseline uses
one fixed radius $\lambda=\del$, while our final algorithm maintains a dyadic
hierarchy of such radii. The \textsc{ShallowPricing} primitive itself is not
our contribution; our use of it together with scale-dependent relaxed feasible
sets and LEP pricing is described in \Cref{sec:contextual}.

\subsection{Relaxed feasible sets versus ellipsoidal outer approximations}
\label{app:relaxed-vs-ellipsoid}

In our non-stationary algorithms, the relaxed feasible set and its ellipsoidal
outer approximation have distinct roles. At each relaxation scale $\lambda$
we maintain
\[
    \Sset_t^{(\lambda)}
    \subseteq
    \mathcal E_t^{(\lambda)}.
\]
The relaxed feasible set is the intersection of the shifted feedback
halfspaces and is therefore the object used for LEP pricing. The ellipsoid is
used only to decide whether the current context direction is sufficiently
uncertain to require exploration and, on such a round, to perform the
shallow-cut update. In particular,
\[
    \width(\Sset_t^{(\lambda)},\uvec_t)
    \le
    \width(\mathcal E_t^{(\lambda)},\uvec_t).
\]

On a multiscale exploration round at scale $\lambda_t$, only
$\mathcal E_t^{(\lambda_t)}$ is shallow-updated. This is because the ellipsoid update uses
$\operatorname{ShallowUpdate}$ with the unclipped midpoint
$\widetilde p_t$ and its inferred feedback $\widetilde f_t$.
Since $\widetilde p_t$ is centered with respect to
$E_t^{(\lambda_t)}$, the shallow-cut depth guarantee above applies
directly at that scale. For every other scale $\lambda$, leaving the ellipsoid
unchanged preserves the outer-approximation invariant because
\[
    \Sset_{t+1}^{(\lambda)}
    \subseteq
    \Sset_t^{(\lambda)}
    \subseteq
    \mathcal E_t^{(\lambda)}
    =
    \mathcal E_{t+1}^{(\lambda)}.
\]

Throughout the remainder of the analysis, let
\[
    R:=\sqrt{2},
\]
which upper-bounds the diameter of the initial feasible set $S_0$.

\begin{lemma}[Shallow-cut exploration bound]
\label[lemma]{lem:generic-shallow-exploration}
Fix a relaxation scale $\lambda>0$ and an exploration threshold
$\eta$ satisfying
\[
    4d\lambda\le \eta < 20R(d+1).
\]
If an ellipsoid is updated only on rounds for which
\[
    \width(E_t,u_t)>\eta,
\]
using the standard $\lambda$-shifted shallow cut associated with the
unclipped midpoint proposal, then within a fixed batch the number of
such updates is at most
\[
    20d^2
    \ln\left(
        \frac{20R(d+1)}{\eta}
    \right).
\]
\end{lemma}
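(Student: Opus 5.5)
This is the standard volume argument from \citet{CLP-20}, adapted to a variable threshold $\eta$. The plan is to track the volume of the ellipsoid updated on exploration rounds and to bound it from above and below.

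\emph{Upper bound on volume.} Each update is taken when $\sqrt{u_t^\top A_t u_t}>\eta/2\ge 2d\lambda$, so the cut depth satisfies $\alpha_t=-\lambda/\sqrt{u_t^\top A_tu_t}\in(-1/(2d),0]$. I will invoke the standard shallow-cut volume bound, which gives
\[
\mathrm{vol}(E_{t+1})/\mathrm{vol}(E_t)\le e^{-(1+d\alpha_t)^2/(5d)}.
\]
Since $1+d\alpha_t\ge 1/2$, every update shrinks the volume by a factor of at least $e^{-1/(20d)}$. Starting from $E_0$, the unit ball of volume $V_d$, after $N$ updates we have $\mathrm{vol}(E)\le V_d e^{-N/(20d)}$.

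\emph{Lower bound on volume.} Volume alone does not lower-bound anything, so I will use the eigenvalues of $A$. This is Lemma-style reasoning from \citet{CLP-20}: the smallest eigenvalue never drops much below the scale $\eta^2$. Concretely, I aim to show that $\lambda_{\min}(A_{t+1})\ge \lambda_{\min}(A_t)$ whenever $\lambda_{\min}(A_t)$ is below a threshold of order $\eta^2/(400(d+1)^2)$. I will do this by applying the rank-one update formula and interlacing. The key point is that an update occurs only when $u_t^\top A_tu_t>\eta^2/4$, so the subtracted direction $q_t$ lies mostly in large-eigenvalue directions; the factor $\frac{d^2}{d^2-1}(1-\alpha_t^2)>1$ then keeps small eigenvalues from decreasing.

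By induction, every eigenvalue stays at least $c\,\eta^2/(R^2(d+1)^2)$ for an absolute $c$ such as $1/400$. I will also bound the largest eigenvalue by $R^2$-type quantities, since the ellipsoid stays within a bounded region, though only the lower bound is needed. It follows that $\mathrm{vol}(E)\ge V_d\,(\eta/(20R(d+1)))^{d}$.

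\emph{Combining.} The two bounds give
\[
V_d(\eta/(20R(d+1)))^d\le V_de^{-N/(20d)},
\]
so $N\le 20d^2\ln(20R(d+1)/\eta)$. The hypothesis $\eta<20R(d+1)$ ensures that the logarithm is positive. The clipping discussion preceding the lemma guarantees that the cut used is exactly the standard shallow cut at the unclipped midpoint, so the classical formulas apply. Non-updated rounds leave $E$ unchanged and do not matter.

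The main obstacle is the eigenvalue lower bound for shallow (rather than central) cuts with the precise constant $20$. I would first reproduce the central-cut argument of \citet{CLP-20}, computing the rank-one update's effect on the smallest eigenvalue via the secular equation. I would then verify that the factor $(1-\alpha_t^2)$ and the reduced coefficient $\frac{2(1+d\alpha_t)}{(d+1)(1+\alpha_t)}\le\frac{2}{d+1}$ only help, or at worst change constants that are absorbed by the slack in $20R(d+1)$.
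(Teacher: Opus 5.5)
Your proposal is correct and follows the same route as the paper. The paper's proof invokes the volume-plus-smallest-eigenvalue argument from the proof of Theorem~3 of \citet{CLP-20}, with $\eta\ge 4d\lambda$ ensuring $\alpha_t\ge -1/(2d)$, together with the clipping remark; your constants ($e^{-1/(20d)}$ shrinkage per update, and $\eta<20R(d+1)$ to compare against the unit-ball start) combine exactly to $20d^2\ln(20R(d+1)/\eta)$.

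When you write out the induction, you also need the one-step bound $A_{t+1}\succeq \frac{d^2(1-\alpha_t)^2}{(d+1)^2}A_t$, which follows from $q_tq_t^\top\preceq A_t$, for steps that start above the threshold. This is why the threshold should be $\eta^2/(400d^2)$, which still leaves $\eta^2/(400(d+1)^2)$ after such a step. Also, the factor $1-\alpha_t^2$ weakens the rescaling rather than helping it. The secular-equation estimate still closes at that threshold, because the cut coefficient is at most $\frac{2}{d+1}$ while the rescaling factor stays at least $\frac{4d^2-1}{4d^2-4}$.
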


\begin{proof}

Inspecting the proof of Theorem~3 of \citet{CLP-20},
the exploration-count argument uses $\eta\ge 4d\lambda$ to ensure
that the shallow-cut depth satisfies $\alpha_t\ge-1/(2d)$, and then gives
\[
    N
    \le
    20d^2
    \ln\left(
        \frac{20R(d+1)}{\eta}
    \right).
\]

The clipping argument above ensures that the ellipsoid update used here
is the same valid shallow update at the unclipped midpoint.
\end{proof}

\section{Identical Context Upper Bound}
\label{app:noncontextual-upper}
\subsection{Complete proof of \Cref{thm:1d-search}}
\label{app:proof-identical-context-upper}

\paragraph{Illustration.}
During exploration, binary search progressively narrows
$[p_i^{\min},p_i^{\max}]$ according to the observed accept/reject feedback.
The final endpoints retain accepted/rejected witness rounds that are used in
the regret analysis below. Once its width is at most $\del$, the algorithm switches
to LEP with a margin and repeatedly posts
$\punderi=\max\{\pimin-\del,0\}$. The margin makes a rejection informative:
if this LEP price is rejected, the valuation must have decreased sufficiently
relative to an earlier exploration round, and the resulting reset can be
charged to the total variation budget. The hard window-size reset, in
contrast, limits the duration of undetectable upward movements. These are
precisely the two ingredients used in the regret analysis below.

\begin{proof}
Let
\[
    \batch_1=\{1,\ldots,t_1\},
    \quad
    \batch_2=\{t_1+1,\ldots,t_2\},
    \quad
    \ldots,
    \quad
    \batch_r=\{t_{r-1}+1,\ldots,\TT\}
\]
denote the batches generated by the algorithm, i.e., consecutive rounds
between resets. Let $\bReg_\phase$ denote the regret incurred in batch
$\batch_\phase$, and let $E_\phase\subseteq\batch_\phase$ denote its
exploration rounds. By construction,
\[
    |\batch_\phase|\le\Wsize.
\]
Also define the variation diameter of batch $\batch_\phase$ by
\[
    \Delta_\phase
    :=
    \max_{j,k\in\batch_\phase}
    |\val_j-\val_k|.
\]
For every exploitation round
$j\in\batch_\phase\setminus E_\phase$, the algorithm posts the same LEP price
with margin,
\[
    \price_j=\punderi=\max\{\pimin-\del,0\}.
\]

For any batch that reaches exploitation, let
\[
    L_i:=p_i^{\min},
    \qquad
    U_i:=p_i^{\max}
\]
denote the final binary-search interval, so that
\[
    U_i-L_i\le\delta.
\]
We first bound the regret on an accepted LEP round
$j\in B_i\setminus E_i$. Recall that
\[
    p_j=\punderi=\max\{L_i-\delta,0\}.
\]

If $U_i<1$, then the final upper endpoint must have been created by a
rejected binary-search query. Hence there exists an exploration round
$t_1\in E_i$ such that
\[
    v_{t_1}<U_i.
\]
Therefore, by the definition of $\Delta_i$,
\[
    v_j
    \le
    v_{t_1}+\Delta_i
    <
    U_i+\Delta_i
    \le
    L_i+\delta+\Delta_i.
\]
Consequently,
\[
    v_j-p_i
    \le
    \Delta_i+2\delta.
\]
Indeed, if $L_i\ge\delta$, then $p_i=L_i-\delta$ and the claim follows
directly; while if $L_i<\delta$, then $p_i=0$ and
\[
    v_j
    <
    L_i+\delta+\Delta_i
    <
    \Delta_i+2\delta.
\]

If instead $U_i=1$, then $U_i-L_i\le\delta$ implies
$L_i\ge1-\delta$. Hence
\[
    p_i=\max\{L_i-\delta,0\}\ge 1-2\delta,
\]
and since $v_j\le1$,
\[
    v_j-p_i\le2\delta\le\Delta_i+2\delta.
\]
Thus every accepted LEP round incurs regret at most
$\Delta_i+2\delta$.

The regret of batch $\batch_\phase$ can be decomposed as
\begin{align*}
    \bReg_\phase
    &=
    \sum_{j\in\batch_\phase}
    \Big(
        \val_j-\Indic{\price_j\le\val_j}\price_j
    \Big)
    \\
    &=
    \sum_{j\in E_\phase}
    \Big(
        \val_j-\Indic{\price_j\le\val_j}\price_j
    \Big)
    +
    \sum_{j\in\batch_\phase\setminus E_\phase}
    \Big(
        \val_j-\Indic{\price_j\le\val_j}\price_j
    \Big)
    \\
    &\le
    |E_\phase|
    +
    |\batch_\phase|(\Delta_\phase+2\del) +1
    \\
    &\le
    O(\log(\TT)
    +
    |\batch_\phase|\del
    +
    \Wsize\Delta_\phase).
\end{align*}
Here we used that binary search requires at most $\log(\TT)$ exploration
rounds, and that on every exploitation round before a reset the regret is at
most $\Delta_\phase+2\del$.

Summing over all batches gives
\[
\begin{aligned}
    \cRT
    =
    \sum_{\phase=1}^r\bReg_\phase
    &\le
    \sum_{\phase=1}^r
    O\left(
        |\batch_\phase|\del
        +
        \Wsize\Delta_\phase
        +
        \log(\TT)
    \right)
    \\
    &=
    O(\TT\del
    +
    \Wsize\sum_{\phase=1}^r\Delta_\phase
    +
    r\log(\TT)).
\end{aligned}
\]
Since the batches are disjoint intervals,
\[
    \sum_{\phase=1}^r\Delta_\phase
    \le
    \TV(\mathbf v)
    \le
    \TVUB.
\]
It therefore remains to bound the number of batches.

\begin{lemma}
\label[lemma]{lem:upperboundbatchnumbers}
The two reset mechanisms imply
\[
    r
    \le
    \frac{\TVUB}{\del}
    +
    \frac{\TT}{\Wsize}
    +
    1.
\]
\end{lemma}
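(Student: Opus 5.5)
I classify each batch boundary (restart) by which mechanism ended the batch, and count each class separately. Every batch except the last ends in one of two ways. Either the window expires, i.e., the batch reaches $\batchend$ with $|\batch_\phase|=\Wsize$ rounds. Or an exploitation price $\punderi$ is rejected (inconsistency restart). Let $r_{\mathrm{per}}$ and $r_{\mathrm{inc}}$ denote the numbers of such batches, so that $r\le r_{\mathrm{per}}+r_{\mathrm{inc}}+1$, where the $+1$ accounts for the final batch, which may be truncated at $\TT$.

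\textbf{Periodic restarts.} The batches are disjoint and each periodically-ended batch has exactly $\Wsize$ rounds. Hence $r_{\mathrm{per}}\Wsize\le\TT$, which gives $r_{\mathrm{per}}\le\TT/\Wsize$.

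\textbf{Inconsistency restarts.} The main step is to show that each such batch has value range $\Delta_\phase>\del$. Since the batches are disjoint intervals and $\sum_\phase\Delta_\phase\le\TVUB$, this yields $r_{\mathrm{inc}}\del<\TVUB$, so $r_{\mathrm{inc}}\le\TVUB/\del$.

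To prove the claim, fix a batch ending with a rejection at an exploitation round $j$, so $\val_j<\punderi$. In particular $\punderi>0$, hence $\punderi=L_\phase-\del$ with $L_\phase=\pimin>\del>0$. Because $L_\phase>0$, the final lower endpoint was set by an accepted binary-search query at some exploration round $t_0\in E_\phase$ with $\price_{t_0}=L_\phase$, so $\val_{t_0}\ge L_\phase$. Then
\[
\Delta_\phase\ge\val_{t_0}-\val_j>L_\phase-(L_\phase-\del)=\del .
\]
This mirrors the witness argument used above for the upper endpoint $U_\phase$.

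The only delicate point is justifying the existence of the accepted witness round. Since $\pimin$ starts at $0$ and changes only on accepted queries, $\pimin>0$ forces at least one acceptance, and the last one sets $\pimin$. The case $\punderi=0$ is harmless because a price of $0$ is never rejected (values are nonnegative).

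When $\TVUB=0$, no inconsistency restart can occur, since it would require $\Delta_\phase>\del>0$. This is consistent with the convention $\TVUB/\del=0$. Summing the three contributions gives $r\le\TVUB/\del+\TT/\Wsize+1$.
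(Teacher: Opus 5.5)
Your proposal is correct and follows the paper's proof essentially step for step: you classify batches by how they end, bound the window-ended ones by $\TT/\Wsize$, add one for the final batch, and charge each inconsistency restart to more than $\del$ within-batch variation, using as witness the accepted binary-search query that set $\pimin$. Your explicit argument that $\pimin>\del>0$ forces such an accepted query to exist spells out a step the paper states only briefly.
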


\begin{proof}
Every batch $\batch_\phase$ ends for one of the following reasons:
\begin{itemize}
    \item the window-size cap is reached, i.e.,
    $|\batch_\phase|=\Wsize$;
    \item $\phase=r$, so this is the final incomplete batch;
    \item an inconsistency occurs on an LEP round and triggers an immediate
    reset.
\end{itemize}

There are at most $\TT/\Wsize$ batches of the first type and at most one
batch of the second type. We now show that there are at most
$\TVUB/\del$ inconsistency resets.

Consider a batch $\batch_\phase$ that ends because an LEP price is rejected
at round $t_\phase$. Since the algorithm resets only after the binary-search
phase, this means that LEP has become overpricing:
\[
    \val_{t_\phase}
    <
    \price_{t_\phase}
    =
    \punderi.
\]
Moreover, an inconsistency can only occur when $\punderi>0$, so
\[
    \punderi
    =
    \pimin-\del.
\]
Hence
\[
    \val_{t_\phase}
    <
    \pimin-\del.
\]

During binary search, the lower endpoint $\pimin$ can increase only after an
accepted query. Therefore, there exists an exploration round
$j\in E_\phase$ at which a price equal to the resulting lower endpoint was
accepted, so
\[
    \pimin
    \le
    \val_j.
\]
Consequently,
\[
    \val_{t_\phase}+\del
    <
    \pimin
    \le
    \val_j,
\]
and therefore
\[
    \del
    <
    |\val_j-\val_{t_\phase}|
    \le
    \Delta_\phase.
\]
In particular,
\[
    \del
    <
    \sum_{k=t_{\phase-1}+1}^{t_\phase-1}
    |\val_{k+1}-\val_k|.
\]
Thus every inconsistency reset can be charged to more than $\del$ total
variation inside its batch. Since these batches are disjoint and
\[
    \TV(\mathbf v)\le\TVUB,
\]
there can be at most $\TVUB/\del$ such resets. Combining the three cases gives
\[
    r
    \le
    \frac{\TT}{\Wsize}
    +
    \frac{\TVUB}{\del}
    +
    1.
\]
\end{proof}

Using \Cref{lem:upperboundbatchnumbers}, we obtain
\[
\begin{aligned}
    \cRT
    &\le
    O(\TT\del
    +
    \Wsize\TVUB
    +
    \left(
        \frac{\TT}{\Wsize}
        +
        \frac{\TVUB}{\del}
        +
        1
    \right)
    \log(\TT)).
\end{aligned}
\]
For notational simplicity, in the tuning calculation below we ignore the
ceiling in the definition of $\Wsize$. This affects the bounds only by an
absolute constant factor: for $x\ge 1$,
\[
    x\le \lceil x\rceil\le 2x,
\]
and the additional truncation by $\TT$ preserves this constant-factor
relation. Thus replacing
$\min\{\TT,\lceil \sqrt{\TT/\TVUB}\rceil\}$ by
$\min\{T,\sqrt{\TT/\TVUB}\}$ does not affect the stated big-$O$ regret bound.
Setting
\[
    \Wsize
    =
    \min \{ \TT, \sqrt{\frac{\TT}{\TVUB}} \},
    \qquad
    \del
    =
    \max \{ \frac{1}{\TT},\sqrt{\frac{\TVUB}{\TT}}\},
\]
gives
\[
\begin{aligned}
    \cRT
    &\le
    O(\TT\del
    +
    \Wsize\TVUB
    +
    \left(
        \frac{\TT}{\Wsize}
        +
        \frac{\TVUB}{\del}
    \right)\log(\TT)
    +
    \log(\TT))
    \\
    &=
    O(\sqrt{\TT\TVUB}\log(\TT)
    +
    \log(\TT)).
\end{aligned}
\]

for the case $\TVUB>\frac{1}{\TT}$ and when $\TVUB\le\frac{1}{\TT}$ we get 
\[
\begin{aligned}
    \cRT
    &\le
    O(\TT\del
    +
    \Wsize\TVUB
    +
    \left(
        \frac{\TT}{\Wsize}
        +
        \frac{\TVUB}{\del}
    \right)\log(\TT)
    +
    \log(\TT))
    \\
    &=
O(1
    +
    \log(\TT))
    =
    O(\log(\TT)).
\end{aligned}
\]

This proves the claimed dynamic-regret bound.
\end{proof}

\begin{remark}
Alternatively for $\TVUB>0$ choosing
\[
    a
    =
    \min\left\{
        \TT,\sqrt{\frac{\TT\log \TT}{\TVUB}}
    \right\},
    \qquad
    \delta
    =
    \max\left\{
        \frac1\TT,\sqrt{\frac{\TVUB\log \TT}{\TT}}
    \right\},
\]
in the bound above yields the slightly sharper guarantee
\[
    \Reg[\ALG]
    =
    O\!\left(
        \sqrt{\TT\TVUB\log \TT}+\log \TT
    \right).
\]
For $\TVUB=0$, we take $\Wsize=\TT$ and $\delta=1/\TT$.
\end{remark}

\section{Single-Scale Contextual Baseline}
\label{app:single-scale-contextual}

This appendix gives the complete single-scale contextual algorithm and its
analysis. As discussed in \Cref{sec:contextual}, this is the
most direct geometric extension of the identical-context warm-up: it maintains
a single relaxed feasible set at radius $\del$, uses shallow pricing with the midpoint proposal clipped to $[0,1]$, and exploits using LEP.

Within a batch, let $\Sset_t$ denote the set of parameters consistent with the
exact feedback observed so far. In parallel, let
$\SExtendedset_t=\Sset_t^{(\del)}$ denote the $\del$-relaxed feasible set.
After posting $\price_t$ in context $\uvec_t$, define
\[
\Hset_t =
\begin{cases}
\{x:\ip{x}{\uvec_t}\le \price_t\}, & \fb_t=0,\\
\{x:\ip{x}{\uvec_t}\ge \price_t\}, & \fb_t=1,
\end{cases}
\qquad
\HExtendedset_t =
\begin{cases}
\{x:\ip{x}{\uvec_t}\le \price_t+\del\}, & \fb_t=0,\\
\{x:\ip{x}{\uvec_t}\ge \price_t-\del\}, & \fb_t=1.
\end{cases}
\]
The feasible sets are updated according to
\[
    \Sset_{t+1}
    =
    \Sset_t\cap\Hset_t,
    \qquad
    \SExtendedset_{t+1}
    =
    \SExtendedset_t\cap\HExtendedset_t.
\]

The algorithm also maintains an ellipsoidal outer approximation
\[
    \SExtendedset_t\subseteq\mathcal E_t.
\]
As in the multiscale algorithm, the relaxed feasible set determines the LEP
price, while the ellipsoid determines whether exploration is necessary.

\begin{algorithm}[t]
\footnotesize
\caption{Single-scale shallow pricing with LEP}
\label{alg:single-scale-contextual}
\begin{algorithmic}[1]

\STATE \textbf{Input:}
$\TT\in\Nspace$, $\dimension\in\Nspace$ with
$\dimension\geq2$, $\TVUB\in[0,\TT]$

\STATE \textbf{Parameters:}
$\del\gets
\max\{1/\TT,\dimension^{1/2}(\TVUB/\TT)^{1/3}\}$,
$\Wsize\gets
\min\{\TT,\lceil
\dimension^{1/2}(\TT/\TVUB)^{1/3}
\rceil\}$,
$\eta\gets4\dimension\del$
\hfill$\triangleright$ set $\Wsize \gets T$ when $\TVUB = 0$

\STATE $t\gets1$, $i\gets0$

\WHILE{$t\leq\TT$}

    \STATE $i\gets i+1$,
    $\batchend\gets\min\{\TT,t+\Wsize-1\}$

    \STATE Set
    $\Sset_t\gets\Sset_0$,
    $\SExtendedset_t\gets\Sset_0$, and
    $\Eset_t\gets\Eset_0$

    \WHILE{$t\leq\batchend$}

        \STATE \textbf{if}
        $\SExtendedset_t=\varnothing$, \textbf{break}
        \hfill$\triangleright$ inconsistency restart

        \STATE Observe $\uvec_t$

        \IF{$\width(\Eset_t,\uvec_t)>\eta$}

            \STATE
            $\midprice_t\gets
            \ip{\ellcenter_t}{\uvec_t}$,
            $\price_t\gets
            \min\{1,\max\{0,\midprice_t\}\}$

        \ELSE

            \STATE
            $\price_t\gets
            \LEP(\SExtendedset_t,\uvec_t)$

        \ENDIF

        \STATE Post $\price_t$ and observe $\fb_t$

        \STATE Construct $\Hset_t$ and $\HExtendedset_t$
        according to $\price_t$ and $\fb_t$

        \STATE
        $\Sset_{t+1}\gets\Sset_t\cap\Hset_t$,
        $\SExtendedset_{t+1}
        \gets\SExtendedset_t\cap\HExtendedset_t$,
        and $\Eset_{t+1}\gets\Eset_t$

        \IF{$\width(\Eset_t,\uvec_t)>\eta$}

            \STATE
            $\virtualfb_t\gets
            \fb_t\indicator{\midprice_t\leq1}$

            \STATE
            $\Eset_{t+1}\gets
            \shallowupdate(
                \Eset_t,\uvec_t;
                \del,\virtualfb_t)$

        \ENDIF

        \STATE $t\gets t+1$

        \STATE \textbf{if}
        $\width(\Eset_{t-1},\uvec_{t-1})\leq\eta$
        and $\fb_{t-1}=0$, \textbf{break}
        \hfill$\triangleright$ inconsistency restart

    \ENDWHILE

\ENDWHILE

\end{algorithmic}
\end{algorithm}

The empty-set restart in \Cref{alg:single-scale-contextual} is a
technical form of inconsistency reset: if the relaxed feedback constraints
become mutually infeasible, the current batch is discarded. As shown below,
such a restart, like a rejected LEP round, can be charged to at least $\del$
variation within the batch.

\begin{theorem}[Single-scale contextual regret]
\label{thm:single-scale-contextual}
For $4d\delta\le \eta<20R(d+1)$, \Cref{alg:single-scale-contextual} has dynamic regret

\[
\cRT
    \le
    O(d^2
        \log\frac{20R(d+1)}{\eta})
    \left(
        \frac{\TT}{\Wsize}
        +
        \frac{\TVUB}{\del}
        +
        1
    \right)
    +
    \TT\eta
    +
    \Wsize\frac{\TVUB}{\del}.
\]
with the convention $a=\TT$ when $\TVUB=0$.
With the parameter choices
\[
\Wsize
    =
    \min \{\TT,\lceil d^{1/2}\left(\frac{\TT}{\TVUB}\right)^{1/3}\rceil\},
    \qquad
    \del
    =
    \max \{ \frac{1}{\TT},
     d^{1/2}\left(\frac{\TVUB}{\TT}\right)^{1/3}\},
    \qquad
    \eta
    =
    4d\del,
\]
the regret is 
\[
    \cRT
    \le
    O\!\left(
        (d^2 + d^{3/2}\TVUB^{1/3}\TT^{2/3})\log (\TT)
    \right).
\]
\end{theorem}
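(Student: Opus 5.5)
The plan follows the batch decomposition used for \Cref{thm:1d-search}. Let $\batch_1,\ldots,\batch_r$ be the batches of \Cref{alg:single-scale-contextual} and $\TVpB_\phase=\max_{s,t\in\batch_\phase}\norm{\hparam_s-\hparam_t}$. Because the batches are disjoint intervals, $\sum_\phase\TVpB_\phase\le\TVUB$. I split each batch's regret into three parts: exploration rounds, rejected LEP rounds, and accepted LEP rounds. The argument rests on one structural claim, proved first.

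\emph{Claim: if $\TVpB_\phase\le\del$, then $\hparam_s\in\SExtendedset_t$ for all $s,t\in\batch_\phase$.}
Clearly $\hparam_s\in\Sset_0$. Consider an earlier round $k$ of the same batch, and suppose $\fb_k=1$. Then
$\ip{\hparam_s}{\uvec_k}\ge \val_k-\norm{\hparam_s-\hparam_k}\ge \price_k-\del$,
using $\norm{\uvec_k}=1$. The case $\fb_k=0$ is symmetric. Hence $\hparam_s$ satisfies every relaxed constraint.

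Two consequences follow in such a batch.
\begin{itemize}
\item The set $\SExtendedset_t$ is never empty, so no empty-set restart occurs.
\item Every LEP price satisfies $\price_t\le\max\{0,\ip{\hparam_t}{\uvec_t}\}=\val_t$, so it is accepted.
\end{itemize}
Therefore every inconsistency restart, of either kind, happens in a batch with $\TVpB_\phase>\del$. There are at most $\TVUB/\del$ such batches. Together with at most $\TT/\Wsize$ window restarts and the final batch, this gives $r\le \TT/\Wsize+\TVUB/\del+1$, exactly as in \Cref{lem:upperboundbatchnumbers}.

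\emph{Exploration.} Within a batch, the ellipsoid is shallow-updated only on rounds with $\width(\Eset_t,\uvec_t)>\eta$. The clipping discussion in \Cref{app:technical-background} shows the invariant $\SExtendedset_t\subseteq\Eset_t$ is preserved. So \Cref{lem:generic-shallow-exploration} applies and bounds the exploration rounds per batch by $20d^2\ln(20R(d+1)/\eta)$. Each such round costs at most $1$. Multiplying by the bound on $r$ gives the first term of the theorem.

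\emph{Rejected LEP rounds.} A rejected LEP round ends its batch, so there is at most one per batch. Their total cost of at most $r$ is absorbed into the first term.

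\emph{Accepted LEP rounds.} I split according to the size of $\TVpB_\phase$.
\begin{itemize}
\item If $\TVpB_\phase\le\del$, the claim gives $\hparam_t\in\SExtendedset_t\subseteq\Eset_t$. Hence
$\val_t-\price_t\le\width(\SExtendedset_t,\uvec_t)\le\width(\Eset_t,\uvec_t)\le\eta$.
Summing over all rounds gives at most $\TT\eta$.
\item If $\TVpB_\phase>\del$, I do not try to localize; Example~\ref{ex:narrow-misplaced-set} shows that localization can genuinely fail here. I bound each round by $1$. There are at most $\TVUB/\del$ such batches, each of length at most $\Wsize$, contributing at most $\Wsize\TVUB/\del$.
\end{itemize}
Summing the three parts yields the first displayed bound.

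\emph{Tuning.} The remaining step is substitution, ignoring ceilings up to constants as in \Cref{app:proof-identical-context-upper}. Take $\TVUB\ge d^{-3/2}\TT^{-2}$ so that the $\del$ maximum is the second term; the small-$\TVUB$ case mirrors the identical-context one. Then:
\begin{itemize}
\item $\TT\eta=4d\del\TT=O(d^{3/2}\TVUB^{1/3}\TT^{2/3})$.
\item $\Wsize\TVUB/\del=\TVUB(\TT/\TVUB)^{2/3}=\TVUB^{1/3}\TT^{2/3}$.
\item $\TT/\Wsize+\TVUB/\del=O(d^{-1/2}\TVUB^{1/3}\TT^{2/3})$.
\end{itemize}
Multiplying the last quantity by $d^2\log\TT$ (valid since $\eta\ge4d/\TT$) gives $d^{3/2}\TVUB^{1/3}\TT^{2/3}\log\TT$. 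The additive $+1$ batch gives $d^2\log\TT$.

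The precondition $\eta<20R(d+1)$ needs separate handling. For large $\TVUB$ it can fail, but then the claimed bound already exceeds $\TT$, so the trivial regret bound suffices. The main obstacle is conceptual rather than computational: accepted LEP rounds in batches with $\TVpB_\phase>\del$ admit no good per-round bound in the single-scale scheme. The crude charge $\Wsize\TVUB/\del$ is what forces the $\TT^{2/3}$ rate.
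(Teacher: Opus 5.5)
Your proposal is correct and follows essentially the paper's proof. The main steps coincide:
\begin{itemize}
\item You split batches at $\TVpB_\phase\le\del$. In those batches $\hparam_t\in\SExtendedset_t$, so every LEP price is accepted with regret at most $\eta$.
\item Batches with $\TVpB_\phase>\del$ number at most $\TVUB/\del$ and are each charged at most $\Wsize$.
\item The per-batch shallow-pricing exploration bound is multiplied by $r\le 1+\TT/\Wsize+\TVUB/\del$.
\item The tuning is the same, with the trivial bound used when $\eta\ge 20R(d+1)$.
\end{itemize}

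The differences are cosmetic. You derive the reset count as the contrapositive of your containment claim. The paper uses that contrapositive only for empty-set restarts and handles rejected-LEP restarts through an explicitly violated relaxed halfspace. You also count rejected LEP rounds separately, whereas the paper's crude bad-batch bound already covers them. That is the cleaner justification, because ``absorbed into the first term'' needs $\ln(20R(d+1)/\eta)$ to be bounded away from zero.
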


\subsection{Shallow-pricing exploration bound}

\begin{lemma}[Shallow-pricing exploration bound]
\label[lemma]{lem:single-scale-exploration}
Suppose
$
    4d\delta\le\eta<20R(d+1).
$ Within any fixed batch, the
number of exploration rounds in \Cref{alg:single-scale-contextual}
is at most
\[
    O\!\left(
        \dimension^2
        \log\frac{20R(\dimension+1)}{\eta}
    \right).
\]
\end{lemma}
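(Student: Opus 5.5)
The plan is to reduce the claim to \Cref{lem:generic-shallow-exploration}, applied to the single ellipsoid $\Eset_t$ maintained within a fixed batch of \Cref{alg:single-scale-contextual}. In that algorithm an exploration round is, by definition, a round with $\width(\Eset_t,\uvec_t)>\eta$. On exactly these rounds, and only on these, the ellipsoid receives the update $\shallowupdate(\Eset_t,\uvec_t;\del,\virtualfb_t)$. This is the standard $\del$-shifted shallow cut at the unclipped midpoint $\midprice_t=\ip{\ellcenter_t}{\uvec_t}$. On all other rounds $\Eset_{t+1}=\Eset_t$, and at each batch start the ellipsoid is reset to $\Eset_0$. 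Hence the number of exploration rounds in a batch equals the number of shallow updates in that batch.

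To apply \Cref{lem:generic-shallow-exploration}, I will check its hypotheses one at a time.
\begin{enumerate}
\item The scale is $\spscale=\del>0$, and the threshold satisfies $4\dimension\del\le\eta<20R(\dimension+1)$ by assumption.
\item Updates occur only on rounds with $\width(\Eset_t,\uvec_t)>\eta$. Since $\eta\ge4\dimension\del$, this gives $\sqrt{\uvec_t^\top A_t\uvec_t}>2\dimension\del$. The cut depth is therefore $\alpha_t>-1/(2\dimension)$, which is the valid shallow-cut regime.
\item Clipping the posted price to $[0,1]$ does not change the ellipsoid update, because the update uses $(\midprice_t,\virtualfb_t)$, as argued in the clipping paragraph of \Cref{app:technical-background}.
\end{enumerate}
With these checks in place, the lemma bounds the number of updates by $20\dimension^2\ln(20R(\dimension+1)/\eta)$, which is the claimed $O(\dimension^2\log(20R(\dimension+1)/\eta))$.

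The one point needing care is that the batch may be cut short by the empty-set check or by an LEP rejection. This only truncates the sequence of updates, so the count can only decrease, and the restart resets $\Eset$ to $\Eset_0$ with the full volume budget for the next batch. Beyond that, the volume-based counting argument inherited from \citet{CLP-20} applies without change. That argument needs only two ingredients: each valid shallow cut shrinks volume by a factor $e^{-\Omega(1/\dimension)}$, and an ellipsoid with width above $\eta$ in the queried direction cannot be too small in that direction. The argument never uses that the true parameter lies in the ellipsoid, so non-stationarity does not affect it.

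The main thing to verify is therefore that the ellipsoid at a batch start is exactly $\Eset_0$, the unit ball of radius $1\le R$ containing $\Sset_0$. This matches the initialization assumed in \Cref{lem:generic-shallow-exploration}, so the constant $R=\sqrt{2}$ is legitimate.
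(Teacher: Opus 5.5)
Your proposal is correct and takes essentially the same route as the paper. Both arguments use $\eta\ge 4\dimension\del$ to place the cut depth in the valid regime $\alpha_t>-1/(2\dimension)$, note that clipping leaves the unclipped-midpoint shallow update unchanged, and then invoke the \citet{CLP-20} shallow-cut counting bound recorded as \Cref{lem:generic-shallow-exploration}; your added remarks on truncated batches, the reset to $\Eset_0$, and the irrelevance of non-stationarity to the volume argument are accurate.
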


\begin{proof}
Consider an exploration round $t$. By
\Cref{alg:single-scale-contextual},
\[
    \width(\mathcal E_t,\uvec_t)>\eta.
\]
Writing
\[
    \Eset_t=E(A_t,\ellcenter_t),
\]
we have
\[
    \width(\mathcal E_t,\uvec_t)
    =
    2\sqrt{\uvec_t^\top A_t\uvec_t},
\]
and therefore
\[
    \sqrt{\uvec_t^\top A_t\uvec_t}
    >
    \frac{\eta}{2}.
\]

Let
\[
    \midprice_t:=\ip{\ellcenter_t}{\uvec_t}.
\]
The posted exploration price is
\[
    \price_t=\min\{1,\max\{0,\midprice_t\}\},
\]
and the ellipsoid is updated using
\[
    \virtualfb_t
    =
    \fb_t\indicator{\midprice_t\le1}.
\]
By \Cref{app:technical-background}, this clipping preserves the
ellipsoidal outer-approximation invariant and the validity of the
standard shallow update.

The normalized depth of this shallow cut satisfies
\[
    \frac{\del}
    {\sqrt{\uvec_t^\top A_t\uvec_t}}
    <
    \frac{2\del}{\eta}
    \le
    \frac{1}{2d}.
\]
Thus every exploration round uses a valid shallow ellipsoid update, and the
standard shallow-cut potential argument recalled in
\Cref{app:technical-background} gives the displayed bound.
\end{proof}

\subsection{Reset count}

\begin{lemma}[Reset count]
\label[lemma]{lem:single-scale-reset}
The number of non-window resets in
\Cref{alg:single-scale-contextual} is at most
$\TVUB/\del$. Consequently, the total number of batches satisfies
\[
    r
    \le
    1+\frac{\TT}{\Wsize}+\frac{\TVUB}{\del}.
\]
\end{lemma}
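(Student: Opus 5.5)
The plan is to show that every non-window reset certifies more than $\del$ parameter variation inside its own batch. Since batches are disjoint intervals of rounds, their within-batch path lengths sum to at most $\TV(\hparams)\le\TVUB$, so at most $\TVUB/\del$ such resets can occur. The batch count then follows as in \Cref{lem:upperboundbatchnumbers}: at most $\TT/\Wsize$ batches end by hitting the window cap, at most one batch is the final incomplete batch, and the rest are non-window resets.

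The key step is a contrapositive containment claim. Fix a batch $\batch_\phase$ and suppose its parameter diameter satisfies $\TVpB_\phase\le\del$. For any two rounds $s<t$ in the batch, the feedback at round $s$ is exact for $\hparam_s$. If $\fb_s=1$, then
\[
\ip{\hparam_t}{\uvec_s}\ge\ip{\hparam_s}{\uvec_s}-\norm{\hparam_t-\hparam_s}\norm{\uvec_s}\ge\price_s-\del,
\]
using Cauchy--Schwarz and $\norm{\uvec_s}=1$. The rejection case is symmetric. Also $\hparam_t\in\Sset_0$ by the model assumptions. Hence $\hparam_t\in\SExtendedset_{t'}$ for every round $t'$ of the batch. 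In particular, the relaxed set is never empty, so an empty-set restart cannot occur while $\TVpB_\phase\le\del$.

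For the other reset type, suppose an LEP price is rejected at round $t$. Then $\price_t=\LEP(\SExtendedset_t,\uvec_t)>\val_t\ge0$, so
\[
\price_t=\min_{x\in\SExtendedset_t}\ip{x}{\uvec_t}.
\]
If $\hparam_t\in\SExtendedset_t$, this would give $\val_t\ge\price_t$, which is a contradiction. So $\hparam_t\notin\SExtendedset_t$, which by the containment claim forces $\TVpB_\phase>\del$.

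One bookkeeping point needs care. In the empty-set case the break happens at the start of round $t$, before any price is posted, so the witness pair must be taken among rounds already in the batch. I will define the diameter over the rounds processed in the batch, including the current round, and apply the containment argument to them. Then $\del<\norm{\hparam_j-\hparam_k}\le\sum_{l=j}^{k-1}\norm{\hparam_{l+1}-\hparam_l}$ for some $j<k$ in the batch. Because the index ranges of distinct batches are disjoint, summing over all non-window resets gives $(\#\text{resets})\cdot\del<\TVUB$ whenever at least one reset occurs, which yields the bound.

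The main obstacle I expect is the empty-set restart, specifically whether the round that triggers the empty check lies in the same batch as the constraints that produced emptiness. This is resolved by observing that the emptiness comes only from constraints collected within the current batch, and applying the containment claim to the current parameter $\hparam_t$ (round $t$ is in the batch by the loop indexing). The remaining step is combining the counts as in \Cref{lem:upperboundbatchnumbers}.
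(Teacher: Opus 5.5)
Your proof is correct and follows essentially the same route as the paper: each non-window reset is charged to more than $\del$ variation inside its batch. For a rejected LEP price this comes from $\hparam_t\notin\SExtendedset_t$, and for an empty-set restart from a containment contradiction, after which the batch count follows from the same three-way case split; one small correction is that when the empty-set check fires at the start of round $t$, round $t$ is not part of the ending batch (the outer loop re-processes it as the first round of the next batch), so the paper's cleaner witness is $\hparam_{t-1}$, though your use of $\hparam_t$ still works because the charged increments $\norm{\hparam_{l+1}-\hparam_l}$ with $l\le t-1$ remain disjoint from those charged to later batches.
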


\begin{proof}
First consider a rejected LEP round $t$. Rejection means
\[
    \ip{\hparam_t}{\uvec_t}<\price_t.
\]
Since the realized valuation is nonnegative, this implies $\price_t>0$.
Hence the clipping at zero is inactive and
\[
    \price_t
    =
    \operatorname{LEP}(\SExtendedset_t,\uvec_t)
    =
    \min_{x\in\SExtendedset_t}\ip{x}{\uvec_t}.
\]
It follows that
\[
    \hparam_t\notin\SExtendedset_t.
\]

Since $\SExtendedset_t$ is the intersection of the $\del$-relaxed feedback
halfspaces generated earlier in the same batch, there exists some previous
round $j<t$ such that
\[
    \hparam_t\notin\HExtendedset_j.
\]
At round $j$, however, $\hparam_j$ satisfied the corresponding exact
halfspace $\Hset_j$.

If $\fb_j=0$, then
\[
    \ip{\hparam_j}{\uvec_j}\le\price_j,
    \qquad
    \ip{\hparam_t}{\uvec_j}>\price_j+\del,
\]
and hence, using $\norm{\uvec_j}=1$,
\[
    \norm{\hparam_t-\hparam_j}
    \ge
    \ip{\hparam_t-\hparam_j}{\uvec_j}
    >
    \del.
\]
If $\fb_j=1$, then
\[
    \ip{\hparam_j}{\uvec_j}\ge\price_j,
    \qquad
    \ip{\hparam_t}{\uvec_j}<\price_j-\del,
\]
and similarly
\[
    \norm{\hparam_t-\hparam_j}
    >
    \del.
\]
Thus every rejected-LEP reset certifies more than $\del$ variation within the
batch.

The same conclusion holds if a batch is restarted because
$\SExtendedset_t=\varnothing$. 
Suppose that $S_t^+=\emptyset$ at the beginning of round $t$.
If the variation diameter of the completed batch were at most $\delta$,
then for every earlier round $j$ in that batch, $\theta_{t-1}$ would
satisfy the $\delta$-relaxed feedback halfspace generated at round $j$.
It also satisfies the relaxed halfspace generated at round $t-1$.
Hence $\theta_{t-1}\in S_t^+$, contradicting $S_t^+=\emptyset$.

Since different reset batches are disjoint intervals and the total variation
budget satisfies
\[
    \TV(\Theta)\le\TVUB,
\]
there can be at most $\TVUB/\del$ non-window resets. The hard window cap
contributes at most $\TT/\Wsize$ additional batches, plus one final incomplete
batch.
\end{proof}

\subsection{Proof of \Cref{thm:single-scale-contextual}}

\begin{proof}
Let $\batch_i$ denote the $i$-th batch and define its variation diameter by
\[
    \Delta_i
    :=
    \max_{j,k\in\batch_i}
    \norm{\hparam_j-\hparam_k}.
\]
We split the batches into \emph{good} batches, for which
$\Delta_i\le\del$, and \emph{bad} batches, for which
$\Delta_i>\del$.

First consider a good batch. For every round $t$ in this batch and every
previous round $j<t$ in the same batch, the exact feedback halfspace generated
at $j$ contains $\hparam_j$. Since
\[
    \norm{\hparam_t-\hparam_j}
    \le
    \Delta_i
    \le
    \del,
\]
the current parameter $\hparam_t$ satisfies the corresponding
$\del$-relaxed halfspace. Therefore
\[
    \hparam_t\in\SExtendedset_t
\]
throughout the batch meaning that every LEP round in a good batch is accepted
\[
\price_t \le \ip{\theta_t}{\uvec_t}
\]

Now consider an LEP exploitation round $t$ in this good batch. Since the
round is not exploratory,
\[
    \width(\mathcal E_t,\uvec_t)\le\eta.
\]
Because
\[
    \SExtendedset_t\subseteq\mathcal E_t,
\]
we have
\[
    \width(\SExtendedset_t,\uvec_t)
    \le
    \width(\mathcal E_t,\uvec_t)
    \le
    \eta.
\]
The LEP price is
\[
    \price_t
    \ge
    \min_{x\in\SExtendedset_t}
    \ip{x}{\uvec_t}.
\]
Since $\hparam_t\in\SExtendedset_t$ and the realized valuation is
nonnegative, the clipped LEP price cannot exceed the current valuation, and hence
\[
\begin{aligned}
    \ip{\hparam_t}{\uvec_t}-\price_t
    &\le
    \max_{x\in\SExtendedset_t}\ip{x}{\uvec_t}
    -
    \min_{x\in\SExtendedset_t}\ip{x}{\uvec_t} \\
    &=
    \width(\SExtendedset_t,\uvec_t)
    \le
    \eta.
\end{aligned}
\]
Thus every accepted LEP round in a good batch incurs regret at most $\eta$.

Since exploration prices are clipped to $[0,1]$, exploration rounds incur regret at most $1$. By
\Cref{lem:single-scale-exploration}, each batch contains only
$\tO(d^2)$ exploration rounds. Hence every good batch $\batch_i$ satisfies
\[
    \bReg_i
    \le
    O(d^2
        \log\frac{20R(d+1)}{\eta})
    +
    |\batch_i|\eta.
\]

For a bad batch, we use the crude bound
\[
    \bReg_i
    \le
    |\batch_i|
    \le
    \Wsize.
\]
Since every bad batch satisfies $\Delta_i>\del$ and the batches are disjoint,
the number of bad batches is at most $\TVUB/\del$.

By \Cref{lem:single-scale-reset}, the total number of batches is at most
\[
    r
    \le
    1+\frac{\TT}{\Wsize}+\frac{\TVUB}{\del}.
\]
Therefore
\[
    \cRT
    \le
    O(d^2
        \log\frac{20R(d+1)}{\eta})
    \left(
        \frac{\TT}{\Wsize}
        +
        \frac{\TVUB}{\del}
        +
        1
    \right)
    +
    \TT\eta
    +
    \Wsize\frac{\TVUB}{\del}.
\]
For notational simplicity, we ignore the ceiling in $\Wsize$ in the
tuning calculation below. Since the unrounded quantity is at least one,
this changes the bounds only by an absolute constant factor.
With
\[
    \Wsize
    =
    \min \{\TT,d^{1/2}\left(\frac{\TT}{\TVUB}\right)^{1/3}\},
    \qquad
    \del
    =
    \max \{ \frac{1}{\TT},
    d^{1/2}\left(\frac{\TVUB}{\TT}\right)^{1/3}\},
    \qquad
    \eta
    =
    4d\del,
\]

If $\eta\ge 20R(d+1)$, the claimed tuned bound follows from the
trivial regret bound $\Reg[\ALG]\le T$, since
\[
    d^{3/2}\TVUB^{1/3}\TT^{2/3}
    =
    \frac{\eta\TT}{4}
    \ge
    5R(d+1)\TT.
\]
Hence it suffices to consider $\eta<20R(d+1)$, where
\Cref{lem:single-scale-exploration} applies.

One may introduce 
\[
Q:=\TVUB^{1/3}\TT^{2/3},
\]
and see that 
\[
\frac{\TT}{\Wsize}\le 1+ \frac{Q}{\sqrt{\dimension}},
\qquad
\frac{\TVUB}{\delta}\le \frac{Q}{\sqrt{\dimension}},
\]
Then 
\[
\TT\eta\le 4\dimension+4\dimension^{3/2}Q, \qquad
\Wsize\frac{\TVUB}{\delta}\le Q,
\]
Overall proving
\begin{align}\label{eq: singlescale-regret-tuned}
    \cRT
    \le
    O\!\left(
        (d^2 + d^{3/2}\TVUB^{1/3}\TT^{2/3})\log (\TT)
    \right),
\end{align}

as claimed.
\end{proof}

We next show that the selected single-scale tuning can incur
$\Omega(\TVUB^{1/3}\TT^{2/3})$ regret even on a stationary instance.

\begin{proposition}[Lower bound for the selected single-scale tuning]
\label{prop:single-scale-tuning-lower-bound}
Consider the single-scale shallow pricing algorithm in dimension
$\dimension=2$, with parameters
\[
    \del=
    \max\left\{
        \frac1{\TT},
        \sqrt{2}\left(\frac{\TVUB}{\TT}\right)^{1/3}
    \right\},
    \qquad
    \Wsize=
    \min\left\{
        \TT,
        \left\lceil
        \sqrt{2}\left(\frac{\TT}{\TVUB}\right)^{1/3}
        \right\rceil
    \right\},
    \qquad
    \eta=8\del,
\]
where $\TVUB\in(0,\TT]$.
At each restart, initialize the relaxed feasible set to
\[
    \Sset_0
    =
    \{x\in\Rspace^2:x\ge0,\ \|x\|\le1\}
\]
and the ellipsoid to
\[
    \Eset_0
    =
    \{x\in\Rspace^2:\|x\|\le1\}.
\]
Suppose each batch contains at most $\Wsize$ rounds.

There exists a stationary instance with identical contexts on which the
algorithm incurs dynamic regret of at least
\[
    \frac{3}{8\sqrt{2}}
    \TVUB^{1/3}\TT^{2/3}.
\]
The same lower bound holds when exploration prices are clipped to $[0,1]$.
\end{proposition}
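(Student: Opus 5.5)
The plan is to build a one-dimensional-in-effect, stationary instance in which the $\del$-relaxation itself forces systematic underpricing. Take $\uvec_t=(1,0)$ and $\hparam_t=(v,0)$ for all $t$, with a constant $v$ to be chosen, for instance $v=1/2$. This gives $\TV=0\le\TVUB$. The point is that the selected tuning makes $\del=\sqrt2(\TVUB/\TT)^{1/3}$ large, while the algorithm pays an $\Omega(\del)$ loss on every exploitation round. Since $\TT\del=\sqrt2\,\TVUB^{1/3}\TT^{2/3}$, the target bound equals $\tfrac{3}{16}\TT\del$. It therefore suffices to show that the total regret is at least $\tfrac{3}{16}\del\TT$.

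\textbf{Step 1 (no inconsistency restarts).} Because the instance is stationary, every batch has diameter $0\le\del$. The argument in the proof of \Cref{thm:single-scale-contextual} then gives $\hparam_t\in\SExtendedset_t$ throughout each batch. Hence LEP prices are always accepted and the relaxed set is never empty, so batches end only via the window cap. The history within every batch is therefore the same deterministic sequence, and it suffices to analyze one generic batch of length $|\batch|$, which is $\Wsize$ except possibly for the last.

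\textbf{Step 2 (per-round loss on exploitation).} Consider an exploitation round $t$. Every accepted constraint has the form $x_1\ge p_j-\del$ with $p_j\le v$. Every rejected constraint has the form $x_1\le p_j+\del$ with $p_j>v$, so it does not cut below $v+\del$. I will check that the point $(\max_j p_j^{\rm acc}-\del,\,0)$ (or $(0,0)$ if this is negative) lies in $\SExtendedset_t$. If so, then $\LEP(\SExtendedset_t,\uvec_t)\le\max\{0,v-\del\}$ and the round costs at least $\min\{v,\del\}$. In the regime $\del\le v$ this loss is $\ge\del$. In the complementary regime $\del>1/2$, I would choose $v$ below $\del$, for example $v=1$ with a suitable dichotomy. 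Then the LEP price is $0$ and every exploitation round loses the full value $v$. The target $\tfrac3{16}\del\TT\le\tfrac{3\sqrt2}{16}\TT$ is then met by a constant-fraction argument, using $\del\le\sqrt2$ when $\TVUB\le\TT$.

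\textbf{Step 3 (exploration does not eat the horizon).} I need a constant fraction, about $3/16$ after the losses above, of all rounds to be exploitation rounds, or else to have exploration rounds that are themselves expensive. The threshold is $\eta=8\del$ and $\Eset_0$ is the unit disk. In direction $(1,0)$ the shallow cuts shrink the ellipsoid's width geometrically, and only $O(\log(1/\del))$ exploration rounds occur before the width drops below $8\del$, after which the direction never triggers again. For $\del\ge 1/\TT$ this is $O(\log\TT)$ rounds per batch. Since $\Wsize\approx\sqrt2(\TT/\TVUB)^{1/3}$, I would compare $(\TT/\Wsize+1)\cdot O(\log(1/\del))$ against $\TT$. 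When $\Wsize$ is at least a constant multiple of $\log(1/\del)$, a $\ge 3/4$ fraction of rounds is exploitation. When $\Wsize$ is small, $\del$ is large, so $\log(1/\del)=O(1)$ and I would count rounds directly. Alternatively, I would argue that each exploration round also loses $\Omega(\del)$, since its midpoint price is either rejected, losing $v$, or accepted at a price below $v$ by the shallow-cut geometry. That would make the per-round loss $\ge\del\cdot c$ on all rounds.

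\textbf{Main obstacle.} The hardest step is Step 3. Making the constant $3/(8\sqrt2)$ come out cleanly requires an explicit count of shallow-cut updates in direction $(1,0)$, or a uniform per-round lower bound of order $\del$ that also covers exploration rounds. I would first try the uniform bound, since it avoids counting rounds. Failing that, I would explicitly track the one-dimensional shallow-cut recursion on $\sqrt{\uvec^\top A\uvec}$ and the center from $\Eset_0$, for $\dimension=2$ and cut depth $\alpha=-\del/\sqrt{\uvec^\top A\uvec}$.
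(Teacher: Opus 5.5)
Your reduction to proving regret at least $\tfrac{3}{16}T\delta$ is sound, and so are Step 1 and the exploitation-round bound of Step 2 (a loss of at least $\min\{v,\delta\}$ per exploitation round). The argument stops, however, exactly where the constant has to be produced. Step 3 is only a plan, and the first option you propose for it does not work. There is no uniform $\Omega(\delta)$ loss on exploration rounds: the ellipsoid's projection onto $u$ is only guaranteed to contain $v$, so nothing prevents an accepted midpoint from landing within $\delta$ below $v$.

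The fallback of counting explorations is feasible. For a fixed direction in $d=2$, the update gives $u^\top A_{t+1}u=\tfrac49(1-\alpha_t)^2\,u^\top A_t u$ with $\alpha_t\in(-\tfrac14,0)$. Hence there are at most $1+\log_{6/5}(1/(4\delta))$ explorations per batch. You would still have to compare $\lceil T/a\rceil$ times this count against $T$ with explicit constants, including ceilings and short horizons, and none of that is done. Your large-$\delta$ branch is also garbled: $v=1$ is not ``below $\delta$,'' since $\delta\le\sqrt2$. The correct observation is that for $\delta\ge\tfrac14$ one has $\eta=8\delta\ge2=\width(\Eset_0,u)$. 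So the algorithm never explores, LEP stays at $0$, and the regret is $vT$.

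The missing idea, which makes all this counting unnecessary, is to charge the loss to restarts rather than to underpricing. At the first round of every batch the state is $(\Sset_0,\Eset_0)$. The algorithm posts price $0$ whether it explores (the midpoint is $\langle 0,u\rangle=0$) or exploits ($\LEP(\Sset_0,u)=0$), so it loses the full value $v$. Since each batch has at most $a$ rounds, there are at least $T/a$ batches. Moreover $a\le\lceil x\rceil\le 2x$ with $x=\sqrt2(T/V)^{1/3}\ge\sqrt2$, so the regret is at least $vT/a\ge\frac{v}{2\sqrt2}V^{1/3}T^{2/3}$.

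This is the paper's proof, with $u=(1/\sqrt2,1/\sqrt2)$ and $v=3/4$. Your context $u=(1,0)$ works equally well with $\theta=(3/4,0)$. With your $v=1/2$, however, this route gives only $\frac{1}{4\sqrt2}$, so the choice of value matters. Unlike your route, this argument needs nothing about exploration counts or the absence of inconsistency restarts. It uses only that every batch has at most $a$ rounds.
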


\begin{proof}
For every round $t\in[\TT]$, set
\[
    \uvec_t=\uvec
    =
    \left(\frac1{\sqrt2},\frac1{\sqrt2}\right),
    \qquad
    \theta_t=\theta
    =
    \left(\frac{3}{4\sqrt2},\frac{3}{4\sqrt2}\right).
\]
These contexts and parameters satisfy the model assumptions, and every
buyer has value
\[
    v_t=\ip{\theta}{\uvec}=\frac34.
\]
Moreover, the parameter sequence is stationary, so its total variation
is zero and hence does not exceed $\TVUB$.

Consider the first round of any batch. If the algorithm explores, it
posts the midpoint of the projection of $\Eset_0$ onto $\uvec$, namely
\[
    \ip{0}{\uvec}=0.
\]
This price is accepted and incurs regret $3/4$. If the algorithm instead
exploits, it posts the lower-envelope price of $\Sset_0$ in direction
$\uvec$, which is
\[
    \max\left\{
        0,
        \min_{x\in\Sset_0}\ip{x}{\uvec}
    \right\}
    =0.
\]
This also incurs regret $3/4$. Thus, regardless of whether the algorithm
initially explores or exploits, the first round of each batch incurs
$3/4$ regret. Clipping exploration prices to $[0,1]$ does not change
this conclusion, since the initial exploration price is $0$.

Let $r$ denote the number of batches. Since each batch contains at most
$\Wsize$ rounds,
\[
    r\ge\frac{\TT}{\Wsize}.
\]
All other rounds incur nonnegative regret, so
\[
    \Reg[\ALG]
    \ge
    \frac{3r}{4}
    \ge
    \frac{3\TT}{4\Wsize}.
\]
Since $\TVUB\le\TT$,
\[
    \sqrt{2}\left(\frac{\TT}{\TVUB}\right)^{1/3}
    \ge \sqrt{2}>1.
\]
Therefore,
\[
    \Wsize
    \le
    \left\lceil
        \sqrt{2}\left(\frac{\TT}{\TVUB}\right)^{1/3}
    \right\rceil
    \le
    2\sqrt{2}\left(\frac{\TT}{\TVUB}\right)^{1/3}.
\]
Consequently,
\[
    \Reg[\ALG]
    \ge
    \frac{3}{8\sqrt{2}}
    \TVUB^{1/3}\TT^{2/3},
\]
as claimed.

This lower bound matches the
$\TVUB^{1/3}\TT^{2/3}$ dependence of the stated upper bound for the
exact tuning of Algorithm~\ref{alg:single-scale-contextual}, up to
logarithmic factors when the dimension is fixed.

In particular, when $\TVUB=1$, this tuning incurs
$\Omega(\TT^{2/3})$ regret even on a stationary instance. The result
concerns the selected parameter choice; it does not rule out improved
guarantees under a different tuning of the single-scale algorithm.
\end{proof}

\section{Multiscale Contextual Analysis}
\label{app:multiscale-contextual}

This appendix gives the complete analysis of
\Cref{alg:multiscale-contextual}. We first establish the nesting and
containment properties of the relaxed feasible sets, then bound the exploration
complexity and LEP regret, and finally combine these ingredients to prove
\Cref{thm:multiscale-contextual}.

\subsection{Structural properties of the relaxed sets}
\label{app:multiscale-structure}

Recall the dyadic scale set
\[
    \Lambda=\{\del,2\del,\ldots,2^L\del\},
\]
where $2^L\delta\ge R$.

\begin{lemma}[Nesting across scales]
\label[lemma]{lem:multiscale-nesting}
If $\lambda\le\lambda'$, then for every round $t$ within a batch,
\[
    \Sset_t^{(\lambda)}
    \subseteq
    \Sset_t^{(\lambda')}.
\]
Consequently, the active set
\[
    \Lambda_t
    =
    \{\lambda\in\Lambda:\Sset_t^{(\lambda)}\neq\varnothing\}
\]
is a suffix of $\Lambda$.
\end{lemma}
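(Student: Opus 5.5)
The plan is an induction on the round index $t$ within a fixed batch, followed by a short deduction for the active set.

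\emph{Base case.} At the start of a batch, every scale is reset to the same set, so $\Sset^{(\lambda)}=\Sset^{(\lambda')}=\Sset_0$ and the inclusion holds with equality.

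\emph{Inductive step.} Assume $\Sset_t^{(\lambda)}\subseteq\Sset_t^{(\lambda')}$. The key point is that in \Cref{alg:multiscale-contextual} all scales are updated in round $t$ with the same posted price $\price_t$ and the same observed feedback $\fb_t$. This holds whichever scale selected the price, and whether the round explores or exploits. Only the margin differs between scales. We then compare the two relaxed halfspaces in the two feedback cases.
\begin{itemize}
\item If $\fb_t=0$: $\Hset_t^{(\lambda)}=\{\xvec:\ip{\xvec}{\uvec_t}\le\price_t+\lambda\}\subseteq\{\xvec:\ip{\xvec}{\uvec_t}\le\price_t+\lambda'\}=\Hset_t^{(\lambda')}$, since $\lambda\le\lambda'$.
\item If $\fb_t=1$: similarly, $\price_t-\lambda\ge\price_t-\lambda'$ gives $\Hset_t^{(\lambda)}\subseteq\Hset_t^{(\lambda')}$.
\end{itemize}
Intersecting the inclusion of the sets with the inclusion of the halfspaces yields
\[
\Sset_{t+1}^{(\lambda)}=\Sset_t^{(\lambda)}\cap\Hset_t^{(\lambda)}\subseteq\Sset_t^{(\lambda')}\cap\Hset_t^{(\lambda')}=\Sset_{t+1}^{(\lambda')},
\]
which completes the induction. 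The ellipsoids play no role here, because the feasible sets are updated independently of the ellipsoid updates.

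\emph{Suffix property.} Suppose $\lambda\in\Lambda_t$ and $\lambda'\ge\lambda$ with $\lambda'\in\Lambda$. Then $\varnothing\neq\Sset_t^{(\lambda)}\subseteq\Sset_t^{(\lambda')}$, so $\lambda'\in\Lambda_t$. Hence $\Lambda_t$ is upward closed in the totally ordered finite set $\Lambda$, which means it is a suffix, possibly empty.

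There is no real obstacle in this argument. The one point to verify is that the relaxed sets at every scale use the actual clipped price $\price_t$ rather than each scale's own midpoint. This is stated explicitly in the algorithm ("for all $\lambda\in\Lambda$, set $\Sset_{t+1}^{(\lambda)}\gets\Sset_t^{(\lambda)}\cap\Hset_t^{(\lambda)}$", with $\Hset_t^{(\lambda)}$ defined from $\price_t$ and $\fb_t$).
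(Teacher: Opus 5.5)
Your proof is correct and follows the paper's argument. For a common posted price $\price_t$ and feedback $\fb_t$, each relaxed halfspace grows with the margin, intersection preserves the inclusion, and the suffix property follows immediately; you only make explicit the induction over rounds and the fact that all scales share the same $\price_t$ and $\fb_t$, which the paper leaves implicit.
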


\begin{proof}
For each observed feedback sign, increasing the relaxation radius only
enlarges the corresponding halfspace:
\[
    H_j^{(\lambda)}
    \subseteq
    H_j^{(\lambda')}
    \qquad
    \text{whenever }\lambda\le\lambda'.
\]
Intersecting these halfspaces over previous rounds preserves the inclusion.
The suffix property follows immediately.
\end{proof}

\begin{lemma}[Existence of an active scale]
\label[lemma]{lem:multiscale-active-nonempty}
For every round $t$ within a batch,
\[
    \Lambda_t\neq\varnothing.
\]
In particular,
\[
    \lambda_t^{\min}
    =
    \min\Lambda_t
\]
is well-defined.
\end{lemma}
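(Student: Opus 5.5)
The plan is to show that the largest scale $\lambda_{\max}=2^L\del$ is always active. Then $\Lambda_t\neq\varnothing$ follows at once. By \Cref{lem:multiscale-nesting}, $\Lambda_t$ is a suffix of $\Lambda$, so it suffices to show that $\lambda_{\max}\in\Lambda_t$. Because $\Lambda_t$ is then a nonempty finite set, $\lambda_t^{\min}=\min\Lambda_t$ is well-defined.

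To show $\Sset_t^{(\lambda_{\max})}\neq\varnothing$, I will exhibit an explicit point of this set: the current parameter $\hparam_t$, or alternatively any fixed within-batch parameter such as $\hparam_{t-1}$. For rounds $j<t$ in the same batch, $\hparam_j$ satisfies the exact feedback halfspace at round $j$. If $\fb_j=0$, then $\ip{\hparam_j}{\uvec_j}<\price_j$. If $\fb_j=1$, then $\ip{\hparam_j}{\uvec_j}\geq\price_j$.

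Every parameter lies in $\Sset_0$, whose diameter is at most $R=\sqrt{2}$. Hence $\norm{\hparam_t-\hparam_j}\le R\le\lambda_{\max}$. Since $\norm{\uvec_j}=1$, Cauchy--Schwarz gives $|\ip{\hparam_t-\hparam_j}{\uvec_j}|\le\lambda_{\max}$. This yields $\ip{\hparam_t}{\uvec_j}\le\price_j+\lambda_{\max}$ in the rejection case and $\ip{\hparam_t}{\uvec_j}\ge\price_j-\lambda_{\max}$ in the acceptance case. Therefore $\hparam_t\in\Hset_j^{(\lambda_{\max})}$ for every earlier round $j$ of the batch. Together with $\hparam_t\in\Sset_0$, this gives $\hparam_t\in\Sset_t^{(\lambda_{\max})}$.

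An even simpler route avoids the time-varying parameters altogether. Every $\xvec\in\Sset_0$ satisfies $0\le\ip{\xvec}{\uvec_j}\le 1$, since $\xvec\ge0$, $\uvec_j\ge0$, and both have norm at most one. Posted prices lie in $[0,1]$. Hence every relaxed halfspace with margin at least $1$ contains all of $\Sset_0$, and since $\lambda_{\max}\ge\sqrt2\ge1$, we get $\Sset_t^{(\lambda_{\max})}=\Sset_0\neq\varnothing$.

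I regard the only real subtlety as a bookkeeping point: the diameter bound $R$ must be compared with the largest scale. That comparison holds by the choice of $L$ with $2^L\del\ge\sqrt2$. Apart from that, the argument is immediate, and I will present the simpler $\Sset_0$ containment argument as the main proof.
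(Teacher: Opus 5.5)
Your proposal is correct. Your first argument is exactly the paper's proof: use $\hparam_t$ as a witness, bound $\norm{\hparam_t-\hparam_j}\le\operatorname{diam}(\Sset_0)\le R\le\lambda_{\max}$, and use $\norm{\uvec_j}=1$ to show $\hparam_t$ satisfies every $\lambda_{\max}$-relaxed halfspace. The argument you promote to the main proof is a genuinely different route. It never refers to the hidden parameters. Every $x\in\Sset_0$ has $\ip{x}{\uvec_j}\in[0,1]$, and posted prices lie in $[0,1]$. Hence every relaxed halfspace with margin at least $1$ contains all of $\Sset_0$, so $\Sset_t^{(\lambda_{\max})}=\Sset_0$ throughout the batch.

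The two routes buy different things. Yours gives a stronger structural fact: the top scale is never cut and carries no information. It also shows that, for this lemma alone, $\lambda_{\max}\ge 1$ would suffice; $\lambda_{\max}\ge\sqrt2$ is still needed elsewhere so that the right scale in \Cref{lem:multiscale-containment} exists. The paper's witness argument does not care what the posted prices are. It is also the same bounded-drift containment mechanism used in \Cref{lem:multiscale-containment}, here with the diameter bounded trivially by $R$, so it fits the rest of the multiscale analysis uniformly.

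There is one bookkeeping point in your main route. The claim that LEP prices lie in $[0,1]$ uses that they are computed from a nonempty subset of $\Sset_0$, which is the lemma itself at earlier rounds. State it as a forward induction over the rounds of the batch, or simply cite the model's requirement $\price_t\in[0,1]$. Finally, the appeal to the suffix property from \Cref{lem:multiscale-nesting} is unnecessary, since $\lambda_{\max}\in\Lambda_t$ already gives $\Lambda_t\neq\varnothing$.
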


\begin{proof}
Let
\[
    \lambda_{\max}=2^L\del\ge R.
\]
Fix any current round $t$ and any previous round $j<t$ in the same batch.
The parameter $\hparam_j$ satisfies the exact feedback halfspace generated at
round $j$. Since both $\hparam_j$ and $\hparam_t$ lie in $\Sset_0$,
\[
    \norm{\hparam_t-\hparam_j}
    \le
    \operatorname{diam}(\Sset_0)
    \le
    R
    \le
    \lambda_{\max}.
\]
Using $\norm{\uvec_j}=1$, the current parameter $\hparam_t$ therefore
satisfies the $\lambda_{\max}$-relaxed version of the feedback halfspace from
round $j$. This holds for every previous round, so
\[
    \hparam_t
    \in
    \Sset_t^{(\lambda_{\max})}.
\]
Hence the largest scale is active.
\end{proof}

\subsection{Containment at the retrospective right scale}
\label{app:multiscale-containment}

For a batch $\batch_\phase$, define its variation diameter by
\[
    \Delta_\phase
    :=
    \max_{j,k\in\batch_\phase}
    \norm{\hparam_j-\hparam_k},
\]
and define the retrospective right scale
\[
    \lambda_\phase
    :=
    \min\{\lambda\in\Lambda:\lambda\ge\Delta_\phase\}.
\]
This scale exists because
\[
    \Delta_\phase
    \le
    \operatorname{diam}(\Sset_0)
    \le
    R.
\]

\begin{lemma}[Multiscale containment]
\label[lemma]{lem:multiscale-containment}
For every $t\in\batch_\phase$ and every $s\in\batch_\phase$ with $s\le t$,
\[
    \hparam_s
    \in
    \Sset_t^{(\lambda_\phase)}.
\]
Moreover,
\[
    \lambda_\phase
    \le
    2\max\{\Delta_\phase,\del\}.
\]
\end{lemma}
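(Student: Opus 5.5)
The proof splits into two independent parts: a containment claim argued halfspace by halfspace, and a short arithmetic bound on the dyadic scale $\lambda_\phase$.

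\emph{Containment.} Fix $t\in\batch_\phase$ and $s\le t$ in $\batch_\phase$. By the update rule, $\Sset_t^{(\lambda_\phase)}$ equals $\Sset_0$ intersected with the relaxed halfspaces $\Hset_j^{(\lambda_\phase)}$, one for each round $j\in\batch_\phase$ with $j<t$. Since $\hparam_s\ge0$ and $\norm{\hparam_s}\le1$, we have $\hparam_s\in\Sset_0$, so it suffices to check each halfspace.

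Fix such a round $j$. The exact feedback at round $j$ is correct for the buyer present then. If $\fb_j=1$, then $\ip{\hparam_j}{\uvec_j}\ge\price_j$. Since $\norm{\uvec_j}=1$, Cauchy--Schwarz gives
\[
    \ip{\hparam_s}{\uvec_j}
    \ge
    \ip{\hparam_j}{\uvec_j}-\norm{\hparam_s-\hparam_j}
    \ge
    \price_j-\Delta_\phase
    \ge
    \price_j-\lambda_\phase .
\]
Here we used $s,j\in\batch_\phase$, so $\norm{\hparam_s-\hparam_j}\le\Delta_\phase$, together with $\lambda_\phase\ge\Delta_\phase$ by definition. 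The case $\fb_j=0$ is symmetric and gives $\ip{\hparam_s}{\uvec_j}\le\price_j+\lambda_\phase$. Hence $\hparam_s\in\Hset_j^{(\lambda_\phase)}$ for every relevant $j$, and so $\hparam_s\in\Sset_t^{(\lambda_\phase)}$.

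One point needs care: $s$ may be later or earlier than $j$. The argument only uses the diameter bound on the pair $(s,j)$, so their order does not matter. This is the same reasoning as in the proof of \Cref{lem:multiscale-active-nonempty}, with $R$ replaced by $\Delta_\phase$.

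\emph{Scale bound.} There are two cases.

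\begin{itemize}
\item If $\Delta_\phase\le\del$, the smallest scale $\del$ already satisfies $\del\ge\Delta_\phase$. Hence $\lambda_\phase=\del\le2\max\{\Delta_\phase,\del\}$.
\item If $\Delta_\phase>\del$, the scale $\lambda_\phase=2^k\del$ has $k\ge1$. By minimality, $2^{k-1}\del<\Delta_\phase$. Therefore $\lambda_\phase<2\Delta_\phase$.
\end{itemize}

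The scale $\lambda_\phase$ exists because $\Delta_\phase\le R\le2^{\scalelevels}\del$, as noted before the statement.

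The main step, though a routine one, is ensuring that the halfspaces use the actual posted price $\price_j$ (possibly clipped), not the midpoint. This holds by the update $\Sset_{t+1}^{(\spscale)}=\Sset_t^{(\spscale)}\cap\Hset_t^{(\spscale)}$. With it, the exact-feedback inequality $\ip{\hparam_j}{\uvec_j}\gtrless\price_j$ is valid for every round, whether exploration or exploitation.
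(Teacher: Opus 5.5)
Your proof is correct and follows essentially the same argument as the paper. You check each relaxed halfspace $\Hset_j^{(\lambda_\phase)}$ using $\norm{\hparam_s-\hparam_j}\le\Delta_\phase\le\lambda_\phase$ and $\norm{\uvec_j}=1$, and you get $\lambda_\phase\le 2\max\{\Delta_\phase,\del\}$ from the same two-case dyadic argument, with the explicit notes that $\hparam_s\in\Sset_0$ and that the order of $s$ and $j$ is irrelevant being minor clarifications the paper leaves implicit.
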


\begin{proof}
Fix $s\le t$ and a previous round $j<t$ in the same batch. If
$\fb_j=0$, then
\[
    \ip{\hparam_j}{\uvec_j}
    \le
    \price_j.
\]
Because
\[
    \norm{\hparam_s-\hparam_j}
    \le
    \Delta_\phase
    \le
    \lambda_\phase,
\]
we obtain
\[
    \ip{\hparam_s}{\uvec_j}
    \le
    \ip{\hparam_j}{\uvec_j}
    +
    \norm{\hparam_s-\hparam_j}
    \le
    \price_j+\lambda_\phase.
\]
Thus $\hparam_s$ satisfies $H_j^{(\lambda_\phase)}$.

The case $\fb_j=1$ is symmetric:
\[
    \ip{\hparam_s}{\uvec_j}
    \ge
    \ip{\hparam_j}{\uvec_j}
    -
    \norm{\hparam_s-\hparam_j}
    \ge
    \price_j-\lambda_\phase.
\]
Hence $\hparam_s$ satisfies every relaxed halfspace defining
$\Sset_t^{(\lambda_\phase)}$.

The dyadic definition of $\lambda_\phase$ gives
\[
    \lambda_\phase<2\Delta_\phase
    \qquad
    \text{whenever }\Delta_\phase\ge\del,
\]
while
\[
    \lambda_\phase=\del
    \qquad
    \text{when }\Delta_\phase<\del.
\]
Therefore
\[
    \lambda_\phase
    \le
    2\max\{\Delta_\phase,\del\}.
\]
\end{proof}

\begin{corollary}[The right scale is active]
\label[corollary]{cor:right-scale-active}
For every $t\in\batch_\phase$,
\[
    \lambda_\phase\in\Lambda_t
    \qquad\text{and hence}\qquad
    \lambda_t^{\min}\le\lambda_\phase.
\]
\end{corollary}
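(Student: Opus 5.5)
The plan is to derive the corollary directly from \Cref{lem:multiscale-containment} together with the definition of the active set and \Cref{lem:multiscale-nesting}.

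Fix a round $t\in\batch_\phase$. Applying \Cref{lem:multiscale-containment} with $s=t$ (which is allowed, since $s\le t$ and $s\in\batch_\phase$) gives
\[
    \hparam_t\in\Sset_t^{(\lambda_\phase)}.
\]
In particular, $\Sset_t^{(\lambda_\phase)}\neq\varnothing$. By the definition
$\Lambda_t=\{\lambda\in\Lambda:\Sset_t^{(\lambda)}\neq\varnothing\}$, and since $\lambda_\phase\in\Lambda$ by construction as the minimum of a nonempty subset of $\Lambda$, this yields $\lambda_\phase\in\Lambda_t$.

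The second claim is then immediate. Because $\lambda_t^{\min}=\min\Lambda_t$, which is well defined by \Cref{lem:multiscale-active-nonempty}, and $\lambda_\phase\in\Lambda_t$, we get $\lambda_t^{\min}\le\lambda_\phase$. One could add that, by the suffix property in \Cref{lem:multiscale-nesting}, every scale between $\lambda_\phase$ and $2^L\del$ is also active, but the corollary does not need this.

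There is essentially no obstacle. The only point to check is that the containment lemma applies at the current round itself, which holds because the relaxed set $\Sset_t^{(\lambda_\phase)}$ is defined by halfspaces from rounds $j<t$ only. At the first round of a batch it equals $\Sset_0$, which contains $\hparam_t$ by the model assumptions.
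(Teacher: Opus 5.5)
Your proof is correct and matches the paper's argument: apply \Cref{lem:multiscale-containment} with $s=t$ to get $\hparam_t\in\Sset_t^{(\lambda_\phase)}$, conclude that $\lambda_\phase$ is active, and read off $\lambda_t^{\min}\le\lambda_\phase$ from the definition of the minimum. Your extra checks, that $\lambda_\phase\in\Lambda$ and that $\hparam_t\in\Sset_0$ covers the first round of a batch, are harmless details the paper leaves implicit.
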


\begin{proof}
\Cref{lem:multiscale-containment} gives
\[
    \hparam_t
    \in
    \Sset_t^{(\lambda_\phase)},
\]
so the set is nonempty and $\lambda_\phase$ is active. The second claim
follows from the definition of $\lambda_t^{\min}$.
\end{proof}

\subsection{Exploration complexity}
\label{app:multiscale-exploration}

\begin{lemma}[Multiscale exploration bound]
\label[lemma]{lem:multiscale-exploration}
Suppose $\TVUB\le \TT/\dimension$. Within any fixed batch, \Cref{alg:multiscale-contextual} performs at
most
\[
    20d^2(L+1)
    \ln\!\left(
        \frac{5R(d+1)}{d\del}
    \right)
\]
exploration rounds.
\end{lemma}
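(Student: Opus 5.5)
The plan is to charge each exploration round to the unique scale $\lambda_t=\min\explorescales_t$ whose ellipsoid it updates. This reduces the claim to a per-scale application of \Cref{lem:generic-shallow-exploration}, followed by a sum over the $L+1$ scales in $\Lambda$. Fix a batch and a scale $\lambda=2^k\del\in\Lambda$. The ellipsoid $\Eset_t^{(\lambda)}$ changes only on rounds where $\lambda_t=\lambda$. On every such round, $\lambda\in\explorescales_t$, so $\width(\Eset_t^{(\lambda)},\uvec_t)>4\dimension\lambda$. The update is the standard $\lambda$-shifted shallow cut at the unclipped midpoint, and the clipping discussion in \Cref{app:technical-background} shows this update is valid. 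On all other rounds the ellipsoid is unchanged, and the outer-approximation invariant $\Sset_t^{(\lambda)}\subseteq\Eset_t^{(\lambda)}$ is preserved by the argument in \Cref{app:relaxed-vs-ellipsoid}. Rounds on which the ellipsoid stays fixed do not affect the volume potential, so the count of updates at scale $\lambda$ is exactly what \Cref{lem:generic-shallow-exploration} bounds, with threshold $\eta=4\dimension\lambda$.

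Next we verify the hypothesis $4\dimension\lambda\le\eta<20R(\dimension+1)$ with $\eta=4\dimension\lambda$. The lower inequality is an equality. The upper one needs $4\dimension\lambda<20R(\dimension+1)$. Since $L$ is minimal with $2^L\del\ge R$, we have $\lambda\le 2^L\del<2R$ when $L\ge1$, and $\lambda=\del$ when $L=0$. For the case $\lambda<2R$, we get $4\dimension\lambda<8\dimension R<20R(\dimension+1)$. For $\lambda=\del$, the assumption $\TVUB\le\TT/\dimension$ gives $\del\le\max\{1/\TT,1\}=1$, so $4\dimension\del\le4\dimension<20R(\dimension+1)$. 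This case check, and making sure the standing assumption is used exactly here, is the only mildly delicate step.

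The lemma then gives at most $20\dimension^2\ln\bigl(20R(\dimension+1)/(4\dimension\lambda)\bigr)$ updates at scale $\lambda$. This is at most $20\dimension^2\ln\bigl(5R(\dimension+1)/(\dimension\del)\bigr)$ because $\lambda\ge\del$, and the logarithm is positive. Finally, every exploration round updates exactly one scale's ellipsoid. Summing over the $L+1$ scales therefore bounds the number of exploration rounds in the batch by $20\dimension^2(L+1)\ln\bigl(5R(\dimension+1)/(\dimension\del)\bigr)$, which is the claimed bound.

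The main obstacle is confirming that interleaving does not break the per-scale potential argument of \citet{CLP-20}. Between consecutive updates at scale $\lambda$, other scales update and the relaxed sets shrink. However, the potential depends only on $\Eset^{(\lambda)}$, which is frozen between its own updates. The width and depth condition $\alpha_t>-1/(2\dimension)$ is checked at the current ellipsoid at update time. So the argument goes through verbatim on the subsequence of rounds where $\lambda_t=\lambda$.
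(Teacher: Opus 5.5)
Your proposal is correct and follows essentially the same route as the paper: charge each exploration round to the selected scale, apply the generic shallow-cut lemma per scale with $\eta=4d\lambda$ (checking $4d\lambda<20R(d+1)$), replace $\lambda$ by $\delta$ inside the logarithm, and sum over the $L+1$ scales. Your case split on $L$ is a harmless variant of the paper's argument that $\delta\le 1<R$ forces $L\ge 1$, so $\lambda<2R$ for every scale; this also shows your $L=0$ case never actually occurs.
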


\begin{proof}
Fix a scale $\lambda\in\Lambda$. The algorithm selects this scale only when it
is active and
\[
    \width(\mathcal E_t^{(\lambda)},\uvec_t)
    >
    4d\lambda.
\]
Whenever it is selected, only
$\mathcal E_t^{(\lambda)}$ is shallow-updated using the standard
$\lambda$-shifted shallow cut associated with the unclipped midpoint proposal.
By \Cref{app:technical-background}, this update remains a valid outer approximation when the posted
exploration price is clipped at $[0,1]$.
Since $\TVUB\le \TT/\dimension$, we have $\del\le1<R$.
By the minimality of $\scalelevels$, every
$\spscale\in\spscaleSet$ satisfies $\spscale<2R$. Hence
\[
    4\dimension\spscale
    <
    8\dimension R
    <
    20R(\dimension+1),
\]
so \Cref{lem:generic-shallow-exploration} applies with
$\eta=4\dimension\spscale,$ bounding the number of
times this scale can be selected by
\[
    20d^2
    \ln\!\left(
        \frac{20R(d+1)}{4d\lambda}
    \right)
    \le
    20d^2
    \ln\!\left(
        \frac{5R(d+1)}{d\del}
    \right),
\]
where we used $\lambda\ge\del$. Summing over the $L+1$ dyadic scales proves
the claim.
\end{proof}

\subsection{Accepted-LEP regret}
\label{app:multiscale-lep}

\begin{lemma}[Accepted LEP regret at the right scale]
\label[lemma]{lem:multiscale-lep-regret}
Consider an accepted LEP round $t$ in batch $\batch_\phase$. Then
\[
    \ip{\hparam_t}{\uvec_t}-\price_t
    \le
    8d\max\{\Delta_\phase,\del\}.
\]
\end{lemma}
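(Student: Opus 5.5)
The plan is to compare the posted LEP price, computed from the smallest active scale $\lambda_t^{\min}$, with the LEP of the retrospective right scale $\lambda_\phase$. Then I will use the width certificate at $\lambda_\phase$ together with the containment $\hparam_t\in\Sset_t^{(\lambda_\phase)}$.

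First, since round $t$ is an LEP (exploitation) round, $\explorescales_t=\varnothing$. So every active scale $\lambda\in\Lambda_t$ satisfies $\width(\Eset_t^{(\lambda)},\uvec_t)\le 4d\lambda$. By \Cref{cor:right-scale-active}, $\lambda_\phase\in\Lambda_t$, and therefore
\[
\width(\Sset_t^{(\lambda_\phase)},\uvec_t)\le\width(\Eset_t^{(\lambda_\phase)},\uvec_t)\le 4d\lambda_\phase,
\]
using the outer-approximation invariant $\Sset_t^{(\lambda)}\subseteq\Eset_t^{(\lambda)}$ from \Cref{app:relaxed-vs-ellipsoid}.

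Second, \Cref{cor:right-scale-active} gives $\lambda_t^{\min}\le\lambda_\phase$. \Cref{lem:multiscale-nesting} then yields $\Sset_t^{(\lambda_t^{\min})}\subseteq\Sset_t^{(\lambda_\phase)}$. Taking minima of $\ip{x}{\uvec_t}$ over these nonempty sets reverses the inclusion, so
\[
\price_t=\LEP(\Sset_t^{(\lambda_t^{\min})},\uvec_t)\ge\LEP(\Sset_t^{(\lambda_\phase)},\uvec_t)\ge\min_{x\in\Sset_t^{(\lambda_\phase)}}\ip{x}{\uvec_t}.
\]

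Third, by \Cref{lem:multiscale-containment} with $s=t$, we have $\hparam_t\in\Sset_t^{(\lambda_\phase)}$. Hence $\ip{\hparam_t}{\uvec_t}\le\max_{x\in\Sset_t^{(\lambda_\phase)}}\ip{x}{\uvec_t}$. Subtracting the previous display gives
\[
\ip{\hparam_t}{\uvec_t}-\price_t\le\width(\Sset_t^{(\lambda_\phase)},\uvec_t)\le4d\lambda_\phase .
\]
Finally, the bound $\lambda_\phase\le2\max\{\Delta_\phase,\del\}$ from \Cref{lem:multiscale-containment} gives the claimed $8d\max\{\Delta_\phase,\del\}$.

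The main point to handle carefully is the second step. The smallest active set may exclude $\hparam_t$, so the regret cannot be bounded through that set directly. Instead, the posted price is only compared from below with the larger-scale LEP, and the width bound comes from $\lambda_\phase$. The clipping at $0$ in $\LEP$ is harmless in this comparison, because $\max\{0,\cdot\}$ is monotone. The acceptance hypothesis is not needed for the inequality itself; it only makes $\ip{\hparam_t}{\uvec_t}-\price_t$ equal to the round's regret.
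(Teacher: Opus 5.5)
Your proposal is correct and follows the paper's proof essentially step for step. It uses the width certificate at the active right scale $\lambda_\phase$, nesting to bound the posted LEP price from below by the LEP at $\lambda_\phase$, the containment $\hparam_t\in\Sset_t^{(\lambda_\phase)}$, and finally $\lambda_\phase\le 2\max\{\Delta_\phase,\del\}$. Your remark that acceptance is not needed for the inequality itself is also accurate; the paper invokes acceptance but the displayed bound holds regardless.
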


\begin{proof}
By \Cref{cor:right-scale-active}, $\lambda_\phase$ is active and
\[
    \lambda_t^{\min}
    \le
    \lambda_\phase.
\]
Since round $t$ is an LEP round, no active scale violates its width threshold.
In particular,
\[
    \width(
        \mathcal E_t^{(\lambda_\phase)},
        \uvec_t
    )
    \le
    4d\lambda_\phase.
\]
Using
\[
    \Sset_t^{(\lambda_\phase)}
    \subseteq
    \mathcal E_t^{(\lambda_\phase)},
\]
we also have
\[
    \width(
        \Sset_t^{(\lambda_\phase)},
        \uvec_t
    )
    \le
    4d\lambda_\phase.
\]

By \Cref{lem:multiscale-nesting},
\[
    \Sset_t^{(\lambda_t^{\min})}
    \subseteq
    \Sset_t^{(\lambda_i)}.
\]
Therefore, by monotonicity of $z\mapsto\max\{0,z\}$,
\[
\begin{aligned}
    \price_t
    &=
    \operatorname{LEP}
    \bigl(\Sset_t^{(\lambda_t^{\min})},\uvec_t\bigr)
    \\
    &\ge
    \operatorname{LEP}
    \bigl(\Sset_t^{(\lambda_i)},\uvec_t\bigr)
    \\
    &\ge
    \min_{x\in\Sset_t^{(\lambda_i)}}
    \ip{x}{\uvec_t}.
\end{aligned}
\]
\Cref{lem:multiscale-containment} gives
\[
    \hparam_t
    \in
    \Sset_t^{(\lambda_\phase)}.
\]
Since the LEP round is accepted,
\[
\begin{aligned}
    \ip{\hparam_t}{\uvec_t}-\price_t
    &\le
    \ip{\hparam_t}{\uvec_t}
    -
    \min_{x\in\Sset_t^{(\lambda_\phase)}}
    \ip{x}{\uvec_t}
    \\
    &\le
    \width(
        \Sset_t^{(\lambda_\phase)},
        \uvec_t
    )
    \\
    &\le
    4d\lambda_\phase
    \\
    &\le
    8d\max\{\Delta_\phase,\del\}.
\end{aligned}
\]
\end{proof}

\subsection{Reset count}
\label{app:multiscale-resets}

\begin{lemma}[Multiscale reset count]
\label[lemma]{lem:multiscale-reset-count}
The number of LEP-rejection resets in
\Cref{alg:multiscale-contextual} is at most
\[
    \frac{\TVUB}{\del}.
\]
Consequently, the total number of batches satisfies
\[
    \Tphase
    \le
    1+\frac{\TT}{\Wsize}+\frac{\TVUB}{\del}.
\]
\end{lemma}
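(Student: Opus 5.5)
We follow the template of \Cref{lem:single-scale-reset}. The key point is that in \Cref{alg:multiscale-contextual} an inconsistency restart happens only after a rejected LEP round. Our plan is to show that every such rejection certifies more than $\del$ parameter variation inside its batch. Fix a batch $\batch_\phase$ ending with a rejected LEP round $t$, so that $\ip{\hparam_t}{\uvec_t}<\price_t$. Values are nonnegative, so $\price_t>0$ and the clipping at zero is inactive. Hence
\[
    \price_t=\min_{x\in\Sset_t^{(\spscale_t^{\min})}}\ip{x}{\uvec_t},
\]
which gives $\hparam_t\notin\Sset_t^{(\spscale_t^{\min})}$. By \Cref{lem:multiscale-nesting}, $\Sset_t^{(\del)}\subseteq\Sset_t^{(\spscale_t^{\min})}$, so also $\hparam_t\notin\Sset_t^{(\del)}$; if $\Sset_t^{(\del)}$ is empty this is immediate. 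This nesting step is the one new ingredient relative to the single-scale argument: the exploitation price comes from the smallest \emph{active} scale rather than from scale $\del$, and monotonicity of the sets transfers the exclusion down to the base scale.

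Next, $\Sset_t^{(\del)}$ is $\Sset_0$ intersected with the $\del$-relaxed halfspaces $\Hset_j^{(\del)}$ for the earlier rounds $j<t$ of the batch. Since $\hparam_t\in\Sset_0$, there is some round $j<t$ in $\batch_\phase$ with $\hparam_t\notin\Hset_j^{(\del)}$. On the other hand, $\hparam_j$ satisfies the exact halfspace from round $j$, because the feedback was truthful for the actual posted price $\price_j$. Suppose $\fb_j=0$. Then $\ip{\hparam_j}{\uvec_j}\le\price_j$ and $\ip{\hparam_t}{\uvec_j}>\price_j+\del$, and since $\norm{\uvec_j}=1$, Cauchy--Schwarz gives $\norm{\hparam_t-\hparam_j}>\del$. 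The case $\fb_j=1$ is symmetric. In either case $\del<\norm{\hparam_t-\hparam_j}\le\sum_{k=j}^{t-1}\norm{\hparam_{k+1}-\hparam_k}$, with every summand lying inside $\batch_\phase$.

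Batches are disjoint consecutive intervals, so these path-length charges come from disjoint sets of consecutive pairs. Summing over all rejection-terminated batches gives $N\del<\TV(\hparams)\le\TVUB$, hence $N\le\TVUB/\del$; when $\TVUB=0$ there are no such resets. Every other batch ends either by reaching the window cap of $\Wsize$ rounds, which happens at most $\TT/\Wsize$ times since such batches are disjoint and have full length, or by being the final batch. This yields $\Tphase\le1+\TT/\Wsize+\TVUB/\del$.

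I expect no serious obstacle. The one point needing care is the passage from the price scale $\spscale_t^{\min}$ to scale $\del$ via nesting, including the possibility that $\Sset_t^{(\del)}$ is inactive. A second check is that the relaxed constraints use the actual clipped price $\price_j$, not the unclipped midpoint; this is exactly what makes the exact-feedback inequality for $\hparam_j$ valid.
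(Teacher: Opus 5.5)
Your proof is correct and follows the paper's argument. A rejected LEP price forces $\hparam_t\notin\Sset_t^{(\spscale_t^{\min})}$, so $\hparam_t$ violates an earlier relaxed halfspace whose exact version $\hparam_j$ satisfies; this certifies more than $\del$ variation inside the batch, which is then charged to $\TVUB$ using disjointness of batches.

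The only difference is minor. The paper works directly with the violated $\spscale_t^{\min}$-relaxed halfspace and uses $\spscale_t^{\min}\geq\del$. Your step of nesting down to scale $\del$ is therefore valid but not needed.
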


\begin{proof}
Consider a rejected LEP round $t$. Rejection means
\[
    \ip{\hparam_t}{\uvec_t}<\price_t.
\]
Since the realized valuation is nonnegative, we must have $\price_t>0$.
Hence the clipping at zero is inactive, and therefore
\[
    \price_t
    =
    \operatorname{LEP}
    \bigl(\Sset_t^{(\lambda_t^{\min})},\uvec_t\bigr)
    =
    \min_{x\in\Sset_t^{(\lambda_t^{\min})}}
    \ip{x}{\uvec_t}.
\]
It follows that
\[
    \hparam_t\notin\Sset_t^{(\lambda_t^{\min})}.
\]

Hence $\hparam_t$ violates some earlier
$\lambda_t^{\min}$-relaxed feedback halfspace generated at a round $j$ in
the same batch, while $\hparam_j$ satisfied the corresponding exact
halfspace.

If $\fb_j=0$, then
\[
    \ip{\hparam_j}{\uvec_j}
    \le
    \price_j,
    \qquad
    \ip{\hparam_t}{\uvec_j}
    >
    \price_j+\lambda_t^{\min},
\]
so
\[
    \norm{\hparam_t-\hparam_j}
    >
    \lambda_t^{\min}
    \ge
    \del.
\]
The case $\fb_j=1$ is symmetric. Thus every LEP-rejection reset certifies
more than $\del$ variation within its batch.

Since reset batches are disjoint and the total variation budget is at most
$\TVUB$, there can be at most $\TVUB/\del$ such resets. Window-size resets
contribute at most $\TT/\Wsize$ additional batches, plus one final batch.
\end{proof}

\subsection{Proof of \Cref{thm:multiscale-contextual}}
\label{app:proof-multiscale}

\begin{proof}

If $\TVUB>\TT/\dimension$, the claimed bound follows from the trivial regret bound
$\Reg[\ALG]\le \TT$, since
\[
    \dimension^{3/2}\sqrt{\TT\TVUB}\ge \TT.
\]
Hence, for the remainder of the proof, it suffices to consider
$\TVUB\le \TT/\dimension$.
Let
\[
    \batch_1,\ldots,\batch_{\Tphase}
\]
be the batches. Since the batches are disjoint intervals,
\[
    \sum_{\phase=1}^{\Tphase}\Delta_\phase
    \le
    \TV(\Theta)
    \le
    \TVUB.
\]
We decompose regret into exploration rounds, accepted LEP rounds, and rejected
LEP rounds.

Since exploration prices are nonnegative and realized valuations lie in
$[0,1]$, each exploration round incurs regret at most $1$. Then by \Cref{lem:multiscale-exploration}, exploration contributes at most
\[
    20d^2(L+1)
    \ln\!\left(
        \frac{5R(d+1)}{d\del}
    \right)
    \Tphase.
\]

For accepted LEP rounds in batch $\batch_\phase$,
\Cref{lem:multiscale-lep-regret} gives per-round regret at most
\[
    8d\max\{\Delta_\phase,\del\}
    \le
    8d(\Delta_\phase+\del).
\]
Since
\[
    |\batch_\phase|
    \le
    \Wsize,
\]
summing over batches yields
\[
\begin{aligned}
    \text{accepted-LEP regret}
    &\le
    8d\Wsize
    \sum_{\phase=1}^{\Tphase}\Delta_\phase
    +
    8d\del
    \sum_{\phase=1}^{\Tphase}|\batch_\phase|
    \\
    &\le
    8d\Wsize\TVUB
    +
    8d\del\TT.
\end{aligned}
\]

By \Cref{lem:multiscale-reset-count}, there are at most
$\TVUB/\del$ rejected LEP rounds, each with regret at most $1$. Moreover,
\[
    \Tphase
    \le
    1+\frac{\TT}{\Wsize}+\frac{\TVUB}{\del}.
\]
Therefore
\[
\begin{aligned}
    \cRT
    \le\;&
    20d^2(L+1)
    \ln\!\left(
        \frac{5R(d+1)}{d\del}
    \right)
    \left(
        1+\frac{\TT}{\Wsize}+\frac{\TVUB}{\del}
    \right)
    \\
    &\quad+
    8d\Wsize\TVUB
    +
    8d\del\TT
    +
    \frac{\TVUB}{\del}.
\end{aligned}
\]
For notational simplicity, in the tuning calculation below we ignore the
ceiling in the definition of $\Wsize$. This affects the bounds only by an
absolute constant factor: for $x\ge 1$,
\[
    x\le \lceil x\rceil\le 2x,
\]
and the additional truncation by $\TT$ preserves this constant-factor
relation. Thus replacing
$\min\{\TT,\lceil \sqrt{\dimension\TT/\TVUB}\rceil\}$ by
$\min\{\TT,\sqrt{\dimension\TT/\TVUB}\}$ does not affect the stated big-$O$ regret bound.
Choose
\[
    \Wsize
    =
    \min \{ \TT, \sqrt{\frac{d\TT}{\TVUB}} \},
    \qquad
    \del
    =
    \max \{ \frac{1}{\TT},\sqrt{\frac{d\TVUB}{\TT}}\}.
\]
Then
\[
    \frac{\TT}{\Wsize}
    \le 1+
    \sqrt{\frac{\TVUB\TT}{d}},
    \qquad 
    \frac{\TVUB}{\del} \le \sqrt{\frac{\TVUB\TT}{d}},
\]
and
\[
    d\Wsize\TVUB
    \le
    d^{3/2}\sqrt{\TVUB\TT},
    \qquad
    d\del\TT
    \le
    d+ d^{3/2}\sqrt{\TVUB\TT}.
\]
Finally,
\[
    L+1
    =
    O\!\left(
        1+\log\frac{R}{\del}
    \right),
\]
Hence
\[
    \cRT
    =
    O\!\left(
        (d^2
        +
        d^{3/2}\sqrt{\TVUB\TT})\log^2(\TT)
    \right).
\]

\end{proof} 
\end{document}